\documentclass[11pt]{article}

% !TEX root = paper.tex

\usepackage{times}

\usepackage{parskip}

\renewcommand{\paragraph}[1]{
	\textbf{#1}~~}

 \usepackage[letterpaper, left=1in, right=1in, top=1in, bottom=1in]{geometry}

\usepackage[colorlinks=true, linkcolor=blue, citecolor=blue]{hyperref}
\usepackage{microtype}

\usepackage{natbib}
\bibliographystyle{plainnat}
\bibpunct{(}{)}{;}{a}{,}{,}

\usepackage{mathtools}
\usepackage{amsmath}
\usepackage{bbm}
\usepackage{amsfonts}
\usepackage{amssymb}

\usepackage{MnSymbol} %Actually conflicts with amssymb and others
\usepackage{amsthm}
\usepackage{xpatch}

%%% theorems

% \theoremstyle{definition}  %Sets style of subsequent newtheorems to 'definition'
% \newtheorem{exercise}{Exercise}
% \newtheorem{claim}{Claim}
\newtheorem{lemma}{Lemma}
% %\newtheorem{assumption}{Assumption}
% \newtheorem{conjecture}{Conjecture}
% \newtheorem{proposition}{Proposition}
% \newtheorem{fact}{Fact}
%\newtheorem{model}{Model}
%\newtheorem{problem}{Problem}
% \newtheorem{question}{Question}
% \newtheorem{assumption}{Assumption}
% \newtheorem{problem}{Problem}
% \newtheorem{model}{Model}
% \theoremstyle{plain}
 
% \newtheorem{example}{Example}
\newtheorem{theorem}{Theorem}

\xpatchcmd{\proof}{\itshape}{\normalfont\proofnameformat}{}{}
\newcommand{\proofnameformat}{\bfseries}

%%% prettyref

\usepackage{prettyref}
\newcommand{\pref}[1]{\prettyref{#1}}

\newcommand{\savehyperref}[2]{\texorpdfstring{\hyperref[#1]{#2}}{#2}}
\newrefformat{eq}{\savehyperref{#1}{\textup{(\ref*{#1})}}}
\newrefformat{eqn}{\savehyperref{#1}{Equation~\ref*{#1}}}
\newrefformat{lem}{\savehyperref{#1}{Lemma~\ref*{#1}}}
\newrefformat{def}{\savehyperref{#1}{Definition~\ref*{#1}}}
\newrefformat{thm}{\savehyperref{#1}{Theorem~\ref*{#1}}}
\newrefformat{fact}{\savehyperref{#1}{Fact~\ref*{#1}}}
\newrefformat{tab}{\savehyperref{#1}{Table~\ref*{#1}}}
\newrefformat{corr}{\savehyperref{#1}{Corollary~\ref*{#1}}}
\newrefformat{cor}{\savehyperref{#1}{Corollary~\ref*{#1}}}
\newrefformat{sec}{\savehyperref{#1}{Section~\ref*{#1}}}
\newrefformat{app}{\savehyperref{#1}{Appendix~\ref*{#1}}}
\newrefformat{ass}{\savehyperref{#1}{Assumption~\ref*{#1}}}
\newrefformat{ex}{\savehyperref{#1}{Example~\ref*{#1}}}
\newrefformat{fig}{\savehyperref{#1}{Figure~\ref*{#1}}}
\newrefformat{alg}{\savehyperref{#1}{Algorithm~\ref*{#1}}}
\newrefformat{rem}{\savehyperref{#1}{Remark~\ref*{#1}}}
\newrefformat{conj}{\savehyperref{#1}{Conjecture~\ref*{#1}}}
\newrefformat{prop}{\savehyperref{#1}{Proposition~\ref*{#1}}}
\newrefformat{proto}{\savehyperref{#1}{Protocol~\ref*{#1}}}
\newrefformat{prob}{\savehyperref{#1}{Problem~\ref*{#1}}}
\newrefformat{claim}{\savehyperref{#1}{Claim~\ref*{#1}}}
\newrefformat{tbl}{\savehyperref{#1}{Table~\ref*{#1}}}
% Additional Packages
\usepackage{algorithm}
\usepackage{algorithmic}

% Math delimiters
\DeclarePairedDelimiter{\abs}{\lvert}{\rvert} %
\DeclarePairedDelimiter{\brk}{[}{]}
\DeclarePairedDelimiter{\crl}{\{}{\}}
\DeclarePairedDelimiter{\prn}{(}{)}
\DeclarePairedDelimiter{\nrm}{\|}{\|}
\DeclarePairedDelimiter{\tri}{\langle}{\rangle}

\DeclarePairedDelimiter{\floor}{\lfloor}{\rfloor}

\let\Pr\undefined

\DeclareMathOperator{\En}{\mathbb{E}}

\DeclareMathOperator{\Pr}{Pr}

% Arg<x>
\DeclareMathOperator*{\argmin}{arg\,min} % * Places subscript directly under operator

% one-off macros

    %Indicator

\newcommand{\eps}{\epsilon}

\newcommand{\ldef}{\vcentcolon=}

% styles

\newcommand{\wh}[1]{\widehat{#1}}

% Special letters: blackboard, mathcal, widehat
% djhsu magic
\def\ddefloop#1{\ifx\ddefloop#1\else\ddef{#1}\expandafter\ddefloop\fi}
\def\ddef#1{\expandafter\def\csname bb#1\endcsname{\ensuremath{\mathbb{#1}}}}
\ddefloop ABCDEFGHIJKLMNOPQRSTUVWXYZ\ddefloop
\def\ddefloop#1{\ifx\ddefloop#1\else\ddef{#1}\expandafter\ddefloop\fi}
\def\ddef#1{\expandafter\def\csname b#1\endcsname{\ensuremath{\mathbf{#1}}}}
\ddefloop ABCDEFGHIJKLMNOPQRSTUVWXYZ\ddefloop
\def\ddef#1{\expandafter\def\csname c#1\endcsname{\ensuremath{\mathcal{#1}}}}
\ddefloop ABCDEFGHIJKLMNOPQRSTUVWXYZ\ddefloop
\def\ddef#1{\expandafter\def\csname h#1\endcsname{\ensuremath{\widehat{#1}}}}
\ddefloop ABCDEFGHIJKLMNOPQRSTUVWXYZ\ddefloop
\def\ddef#1{\expandafter\def\csname hc#1\endcsname{\ensuremath{\widehat{\mathcal{#1}}}}}
\ddefloop ABCDEFGHIJKLMNOPQRSTUVWXYZ\ddefloop
\def\ddef#1{\expandafter\def\csname t#1\endcsname{\ensuremath{\widetilde{#1}}}}
\ddefloop ABCDEFGHIJKLMNOPQRSTUVWXYZ\ddefloop
\def\ddef#1{\expandafter\def\csname tc#1\endcsname{\ensuremath{\widetilde{\mathcal{#1}}}}}
\ddefloop ABCDEFGHIJKLMNOPQRSTUVWXYZ\ddefloop

% Names

% !TEX root = paper.tex

%% Contains Macros to be Globally Used. Inspired by Dylan Foster. 

%\newcommand{\x}{\mathbf{x}}

%\newcommand{\Reg}{\mathrm{\mathbf{Reg}_{n}}}
%\newcommand{\Rel}{\mathrm{\mathbf{Rel}}}
%\newcommand{\RegM}{\mathrm{\mathbf{Reg}^{Meta}_{n}}}
%\newcommand{\Rad}{\mathrm{\mathbf{Rad}}}
%\newcommand{\RadH}{\widehat{\Rad}_{\F}}
%\newcommand{\Radiid}{\mathrm{\mathbf{Rad}}_{\F, n}^{\mathrm{iid}}}
%\newcommand{\RadD}{\mathrm{\mathbf{Rad}}_{\F, n}^{\D}}
%\newcommand{\Radseq}{\mathrm{\mathbf{Rad}}_{\F, n}^{\mathrm{seq}}}

%

\newcommand{\grad}{\nabla}

%%%%%%%%%%%%%%%%%%%%%%%%%%%%%%%%%%%%%%%%%%%%%%%%%%%%%%%%%%%%%%%%%%%%
% Misc macros for nips paper %%%%%%%%%%%%%%%%%%%%%%%%%%%%%%%%%%%%%%%
%%%%%%%%%%%%%%%%%%%%%%%%%%%%%%%%%%%%%%%%%%%%%%%%%%%%%%%%%%%%%%%%%%%%

% Learning theory stuff
%
%\newcommand{\poprisk}{L_{\cD}}
%\newcommand{\emprisk}{\wh{L}_{n}}
%
%%\newcommand{\rad}{\mathfrak{R}}
%%\newcommand{\radvec}{\rad^{\mathrm{vec}}}
%
%
%\newcommand{\vc}{\mathrm{vc}}
%
%\newcommand{\polylog}{\mathrm{polylog}}
%
%% Graph notation %%%%%%%%%%%%%%%%%
%
%\renewcommand{\path}{\mathsf{path}}
%\newcommand{\inpath}{\mathsf{inpath}}
%\renewcommand{\root}{\mathsf{root}}
%\newcommand{\innodes}{V_{\mathsf{in}}}
%\newcommand{\outnodes}{V_{\mathsf{out}}}
%
%\newcommand{\nodein}{\mathsf{in}}
%% \newcommand{\nodein}{\delta^{\mathsf{in}}}
%
%\newcommand{\nodespace}{\cT}
%\newcommand{\nodeclass}{\cG}
%
%\newcommand{\bg}{\mb{g}}
%
%\newcommand{\prodspace}{\bigtimes}
%
%\newcommand{\inspace}{\cT_{\mathsf{in}}}
%
%\newcommand{\bcG}{\mb{\cG}}
%

\usepackage{thmtools} 
\usepackage{thm-restate}
\usepackage{multirow}
\usepackage{mathrsfs}

\allowdisplaybreaks

\newcommand{\removed}[1]{}

% blake's macros
\newcommand{\inner}[2]{\left\langle #1 , #2 \right\rangle}
\newcommand{\norm}[1]{\left\| #1 \right\|}
\newcommand{\parens}[1]{\left( #1 \right)}

\newcommand{\sfo}[0]{\lsO}
\newcommand{\dfo}[0]{\detO}
\newcommand{\sample}[0]{\gsO}

\newcommand{\mone}[0]{\mathsf{m}_\epsilon}

\newcommand{\FHL}[0]{\cF[H,\lambda]}

\newcommand{\FRHL}[0]{\dbF{H}{\lambda}{R}}% \cF^R_{H,\lambda}}
\newcommand{\FRHz}[0]{\dbF{H}{\lambda=0}{R}}

\newcommand{\FDHz}[0]{\rbF{H}{\lambda=0}{\Delta}}

% Ayush's Macros
\newcommand{\lsO}{\cO^\sigma _{\nabla f}}

\newcommand{\detO}{\cO_{\nabla F}}

\newcommand{\gsO}{\cO^\sigma_{f}}

\newcommand{\rbF}[3]{\cF_{\text{RB}}[#1, #2; #3]}
\newcommand{\dbF}[3]{\cF_{\text{DB}}[#1, #2; #3]}
\newcommand{\rbFD}[4]{\cF^{#4}_{\text{RB}}[#1, #2; #3]}
\newcommand{\dbFD}[4]{\cF^{#4}_{\text{DB}}[#1, #2; #3]}
\usepackage{array}
\newcolumntype{P}[1]{>{\centering\arraybackslash}p{#1}}

\newcommand{\expi}[1]{(#1)}
\newcommand{\empF}{\wh{F}_m}
\newcommand{\gltwo}{\textrm{AC-SA}^2}
\newcommand{\varsquared}{\sigma^{2}}

\title{The Complexity of Making the Gradient Small \\in Stochastic Convex Optimization}
\author{%
Dylan J. Foster\thanks{dylanf@mit.edu} \\
MIT\\
\and
Ayush Sekhari\thanks{sekhari@cs.cornell.edu}\\
Cornell University\\
\and 
Ohad Shamir\thanks{ohad.shamir@weizmann.ac.il}\\
Weizmann Institute\\% of Science\\
\and 
Nathan Srebro\thanks{nati@ttic.edu}\\
TTI Chicago\\
\and
Karthik Sridharan\thanks{sridharan@cs.cornell.edu}\\
Cornell University\\
\and
Blake Woodworth\thanks{blake@ttic.edu}\\
TTI Chicago\\
}
 %An alternate version of the title.
%\author{
%\begin{tabular}{P{1.8in} P{1.8in} P{1.8in} P{0.4in}}
%\begin{tabular}{P{1.8in}} Dylan Foster\thanks{dylanf@mit.edu} \\ MIT \end{tabular} & \begin{tabular}{P{1.8in}} Ayush Sekhari \thanks{sekhari@cs.cornell.edu} \\ Cornell University \end{tabular}  &  \begin{tabular}{c} Ohad Shamir \thanks{ohad.shamir@weizmann.ac.il} \\ Weizmann Institute of Science \end{tabular}  & \\~\\
%\begin{tabular}{P{1.8in}} Nathan Srebro\thanks{nati@ttic.edu} \\ TTI Chicago \end{tabular} & \begin{tabular}{P{1.8in}} Karthik Sridharan \thanks{sridharan@cs.cornell.edu} \\ Cornell University \end{tabular}  &  \begin{tabular}{P{1.8in}} Blake Woodworth \thanks{blake@ttic.edu} \\ TTI Chicago \end{tabular}  & 
%\end{tabular} 
%}
\date{}

\begin{document}
\maketitle 
\begin{abstract}%
We give nearly matching upper and lower bounds on the oracle complexity of finding $\eps$-stationary points $\prn*{\nrm{\nabla F(x)}\leq\epsilon}$ in stochastic convex optimization. We jointly analyze the oracle complexity in both the local stochastic oracle model and the global oracle (or, statistical learning) model. This allows us to decompose the complexity of finding near-stationary points into \emph{optimization complexity} and \emph{sample complexity}, and reveals some surprising differences between the complexity of stochastic optimization versus learning. Notably, we show that in the global oracle/statistical learning model, only \emph{logarithmic dependence on smoothness} is required to find a near-stationary point, whereas polynomial dependence on smoothness is necessary in the local stochastic oracle model. In other words, the separation in complexity between the two models can be exponential, and that the folklore understanding that smoothness is required to find stationary points is only weakly true for statistical learning.

Our upper bounds are based on extensions of a recent ``recursive regularization'' technique proposed by \citet{allen2018Howto}. We show how to extend the technique to achieve near-optimal rates, and in particular show how to leverage the extra information available in the global oracle model. Our algorithm for the global model can be implemented efficiently through finite sum methods, and suggests an interesting new computational-statistical tradeoff.

\end{abstract}

\section{Introduction}\label{sec:intro}

Success in convex optimization is typically defined as finding a point
whose value is close to the minimum possible value. \emph{Information-based complexity} of optimization attempts to understand the minimal amount of effort required to reach a desired level of suboptimality under different oracle models for access to the function \citep{nemirovskyyudin1983,traub1988information}. This complexity---for both deterministic and stochastic convex optimization---is tightly understood across a wide variety of settings \citep{nemirovskyyudin1983,traub1988information,agarwal2009information,braun2017lower}, and efficient algorithms that achieve optimal complexity are well known.

Recently, there has been a surge of interest in optimization for \emph{non-convex} functions. In this case, finding a point with near-optimal function value is typically intractable under standard assumptions---both computationally and information-theoretically. For this reason, a standard task in non-convex optimization is to find an $\eps$-stationary point, i.e., a point where the gradient is small $\prn*{\nrm{\grad{}F(x)}\leq{}\epsilon}$. 

In stochastic non-convex optimization, there has been a flurry of recent research on algorithms with provable guarantees for finding near-stationary points
\citep{ghadimi2013stochastic,DBLP:GhadimiL16,ReddiHSPS16,allen2017katyusha,lei2017non,jin2017escape,zhou2018stochastic,fang2018spider}. However, the stochastic oracle complexity of finding near-stationary points is not yet well understood, so we do not know whether existing algorithms
are optimal, or how we hope to improve upon them.

Recent work by \citet{carmon2017lower1,carmon2017lower2} establishes tight bounds on the {\em deterministic} first-order oracle complexity of finding near-stationary points of smooth functions, both convex and non-convex.  For convex problems, they prove that accelerated gradient descent is optimal both for finding approximate minimizers and approximate stationary points, while for non-convex problems, gradient descent is optimal for finding approximate stationary points.  The picture is simple and complete: the same deterministic first-order methods that are good at finding approximate minimizers are also good at finding approximate stationary points, even for non-convex functions. 

However, when one turns their attention to the {\em stochastic} oracle complexity of finding near-stationary points, the picture is far from clear. Even for \emph{stochastic convex optimization}, the oracle complexity is not yet well understood. This paper takes a first step toward resolving the general case by providing nearly tight upper and lower bounds on the oracle complexity of finding near-stationary points in stochastic convex optimization, both for first-order methods and for global (i.e., statistical learning) methods. At first glance, this might seem trivial, since exact minimizers are equivalent to exact stationary points for convex functions. When it comes to finding \emph{approximate} stationary points the situation is considerably more complex, and the equivalence does not yield optimal quantitative rates. For example, while the stochastic gradient descent (SGD) is (worst-case) optimal for stochastic convex optimization with a first-order oracle, it appears to be far from optimal for finding near-stationary points.

\begin{table}[t]
\center
\begin{tabular}{c c c c c}
&& Deterministic  & Sample & Stochastic  \\
&& First-Order Oracle & Complexity & First-Order Oracle \\\hline
\parbox[t]{1cm}{\multirow{2}{*}{\rotatebox[origin=c]{90}{\small{$\norm{x_0-x^*}\leq R$}}}}
& \small{Upper:}
& \parbox[t]{4cm}{\centering $\tilde{O}\prn*{\sqrt{\frac{HR}{\epsilon}}}$\\\small{\cite{nesterov2012make}}}
& \parbox[t]{3cm}{\centering $O\prn*{\frac{\sigma^2}{\epsilon^2}\log^{3}\parens{\frac{HR}{\epsilon}}}$\\\small{(\pref{cor:sample-complexity-bounded})}}
& \parbox[t]{3cm}{\centering $\tilde{O}\prn*{\sqrt{\frac{HR}{\epsilon}} + \frac{\sigma^2}{\epsilon^2}}$\\\small{(\pref{cor:sgd3-bounded-domain})}}
\rule[-20pt]{0pt}{37pt}\\ ~\\
& \small{Lower:}
& \parbox[t]{4cm}{\centering $\Omega\prn*{\sqrt{\frac{HR}{\epsilon}}}$\\\small{\cite{carmon2017lower2}}}
& \parbox[t]{3cm}{\centering $\Omega\prn*{\frac{\sigma^2}{\epsilon^2}}$\\\small{(\pref{thm:statistical-lower-bound})}}
& \parbox[t]{4cm}{\centering $\Omega\prn*{\sqrt{\frac{HR}{\epsilon}} + \frac{\sigma^2}{\epsilon^2}\log\parens{\frac{HR}{\epsilon}}}$\\\small{(\pref{thm:first-order-lower-bound})}}
\rule[-24pt]{0pt}{28pt}\\ \hline
\parbox[t]{1cm}{\multirow{2}{*}{\rotatebox[origin=c]{90}{\parbox[t]{2.345cm}{\centering \scriptsize{$F(x_0)-F(x^*)\leq \Delta$}}}}}
& \small{Upper:}
& \parbox[t]{4cm}{\centering $\tilde{O}\prn*{\frac{\sqrt{H\Delta}}{\epsilon}}$\\\small{\cite{carmon2017lower2}}}
& \parbox[t]{3cm}{\centering $O\prn*{\frac{\sigma^2}{\epsilon^2}\log^{3}\parens{\frac{H\Delta}{\epsilon^2}}}$\\\small{(\pref{cor:sample-complexity-bounded})}}
& \parbox[t]{3cm}{\centering $\tilde{O}\prn*{\frac{\sqrt{H\Delta}}{\epsilon} + \frac{\sigma^2}{\epsilon^2}}$\\\small{(\pref{cor:sgd3-bounded-domain})}}
\rule[-24pt]{0pt}{41pt}\\~\\
& \small{Lower:}
& \parbox[t]{4cm}{\centering $\Omega\prn*{\frac{\sqrt{H\Delta}}{\epsilon}}$\\\small{\cite{carmon2017lower2}}}
& \parbox[t]{3cm}{\centering $\Omega\prn*{\frac{\sigma^2}{\epsilon^2}}$\\\small{(\pref{thm:statistical-lower-bound})}}
& \parbox[t]{4cm}{\centering $\Omega\prn*{\frac{\sqrt{H\Delta}}{\epsilon} + \frac{\sigma^2}{\epsilon^2}\log\parens{\frac{H\Delta}{\epsilon^2}}}$\\\small{(\pref{thm:first-order-lower-bound})}}
\rule[-24pt]{0pt}{26pt}\\\hline
\end{tabular}
\caption{Upper and lower bounds on the complexity of finding $x$
  such that $\norm{\nabla F(x)}\leq\epsilon$ for convex problems with
  $H$-Lipschitz gradients, where $\sigma^2$ is a bound on the variance
  of gradient estimates.}
  \label{tab:results}
\end{table}

\subsection{Contributions}
We present a nearly tight analysis of the local stochastic oracle complexity and global stochastic oracle complexity (``sample complexity'') of finding approximate stationary points in stochastic convex optimization. Briefly, the highlights are as follows:
\begin{itemize}
\item We give upper and lower bounds on the local and global stochastic oracle complexity that match up to log factors. In particular, we show that the local stochastic complexity of finding stationary points is (up to log factors) characterized as the sum of the deterministic oracle complexity and the sample complexity.
\item As a consequence of this two-pronged approach, we show that the gap between local stochastic complexity and sample complexity of finding near-stationary points is at least \emph{exponential} in the smoothness parameter.
\item We obtain the above results through new algorithmic improvements. We show that the recursive regularization technique introduced by \cite{allen2018Howto} for local stochastic optimization can be combined with empirical risk minimization to obtain \emph{logarithmic} dependence on smoothness in the global model, and that the resulting algorithms can be implemented efficiently.
\end{itemize}

Complexity results are summarized in \pref{tab:results}. Here we discuss the conceptual contributions in more detail.

\paragraph{Decomposition of stochastic first-order complexity.}
For stochastic optimization of convex functions, there is a simple and powerful connection between three oracle complexities: Deterministic, local stochastic, and global stochastic. For many well-known problem classes, the stochastic
first-order complexity is equal to the sum (equivalently, maximum) of the
deterministic first-order complexity and the sample complexity.
This decomposition of the local stochastic complexity into an ``optimization term'' plus a ``statistical term'' inspires optimization methods, guides analysis, and facilitates comparison of different algorithms. It indicates that ``one pass'' stochastic approximation algorithms like SGD are optimal for stochastic optimization in certain parameter regimes, so that we do not have to resort to sample average approximation or methods that require multiple passes over data.

We establish that the same decomposition holds for the task of finding approximate stationary points. Such a characterization should not be taken for granted, and it is not clear a priori that it should hold
for finding stationary points. Establishing the result requires both developing new algorithms with near-optimal sample complexity in the global model, and improving previous local stochastic methods \citep{allen2018Howto} to match the optimal deterministic complexity. 

\paragraph{Gap between sample complexity and stochastic first-order complexity.}
For non-smooth convex objectives, finding an approximate stationary
point can require finding an \emph{exact} minimizer of the function (consider the absolute value function).
Therefore, as one would expect, the deterministic and stochastic first-order oracle complexities for finding near-stationary  points scale polynomially with the smoothness constant, even in low dimensions. Ensuring an approximate stationary point is impossible for non-smooth instances, even with an unbounded number of first-order oracle accesses.  Surprisingly, we show that the sample complexity depends at most logarithmically on the smoothness. In fact, in one dimension the dependence on smoothness can be removed entirely.

\paragraph{Improved methods.}
Our improved sample complexity results for the global stochastic oracle/statistical learning model are based on a new algorithm which uses the \emph{recursive regularization} (or, ``SGD3'') approach introduced by \cite{allen2018Howto}. The methods iteratively solves a sequence of subproblems via regularized empirical risk minimization (RERM). Solving subproblems through RERM allows the method to exploit global access to the stochastic samples. Since the method enjoys only logarithmic dependence on smoothness (as well as initial suboptimality or distance to the optimum), it provides a better alternative to \emph{any} stochastic first-order method whenever the smoothness is large relative to the variance in the gradient estimates. Since RERM is a finite-sum optimization problem, standard finite-sum optimization methods can be used to implement the method efficiently; the result is that we can beat the sample complexity of stochastic first-order methods with only modest computational overhead. 

For the local stochastic model, we improve the SGD3 method of \cite{allen2018Howto} so that the ``optimization'' term matches the optimal deterministic oracle complexity. This leads to a quadratic improvement in terms of the initial distance to the optimum (the ``radius'' of the problem), $\norm{x_0-x^*}$.  We also extend the analysis to the setting where initial sub-optimality $F(x_0) - F(x^*)$ is bounded but not the radius--a common setting in the analysis of non-convex optimization algorithms and a setting in which recursive regularization was not previously analyzed.  

\section{Setup}
We consider the problem of finding an $\epsilon-$stationary point in the stochastic convex optimization setting. That is, for a convex function $F: \bbR^d \mapsto \bbR$, our goal is to find a point $x \in \bbR^d$ such that
\begin{equation}
    \nrm*{\nabla F(x)} \leq \epsilon,
\end{equation}
given access to $F$ only through an \emph{oracle}.\footnote{Here, and for the rest of the paper, $\nrm*{\cdot}$ is taken to be the Euclidean norm.} Formally, the problem is specified by a class of functions to which $F$ belongs, and through the type of oracle through which we access $F$. We outline these now.

\paragraph{Function classes.}
Recall that $F: \bbR^d \to \bbR$ is is said to $H$-smooth if
  \begin{equation}
	F(y) \leq  F(x) + \tri*{\nabla F(x),  y-x} +  \frac{H}{2} \nrm*{y - x}^2\quad\forall x, y \in \bbR^d,
  \end{equation}
   and is said to be $\lambda$-strongly-convex if
  \begin{equation}
	F(y) \geq F(x) + \tri*{\nabla F(x), y - x} + \frac{\lambda}{2} \nrm*{y - x}^2\quad\forall x, y \in \bbR^d.
  \end{equation}

We focus on two classes of objectives, both of which are defined relative to an arbitrary initial point $x_0$ provided to the optimization algorithm.
\begin{enumerate}
\item \emph{Domain-bounded functions.}
\begin{equation}
\begin{aligned} 
\dbFD{H}{\lambda}{R}{d} &= \crl*{ F: \bbR^d \to \bbR ~~ \middle|  ~~ 
 \begin{array}{l} { \text{$F$ is $H$-smooth and $\lambda$-strongly convex  }}\\
 \text{$\argmin_{x}F(x)\neq{}\emptyset$} \\
\text{$\exists x^* \in \argmin_{x} F(x)$  s.t.  $\norm{x_0 - x^*} \leq R$} 
\end{array} 
}.
\end{aligned}
\end{equation}
\item \emph{Range-bounded functions.}
\begin{equation}
\begin{aligned}											       
\rbFD{H}{\lambda}{\Delta}{d}  &= \crl*{ F: \bbR^d \to \bbR ~~ \middle|  ~~  
\begin{array}{l} { \text{$F$ is $H$-smooth and $\lambda$-strongly convex  }  }\\
 \text{$\argmin_{x}F(x)\neq{}\emptyset$} \\
\text{$F(x_0) - \min_x F(x) \leq \Delta$ } 
\end{array} }.
\end{aligned}
\end{equation}
\end{enumerate}
We emphasize that while the classes are defined in terms of a strong convexity parameter, our main complexity results concern the non-strongly convex case where $\lambda=0$. The strongly convex classes are used for intermediate results. We also note that our main results hold in arbitrary dimension, and so we drop the superscript $d$ except when it is pertinent to discussion.

\paragraph{Oracle classes.}
An oracle accepts an argument $x \in \bbR^d$ and provides (possibly noisy/stochastic) information about the objective $F$ around the point $x$. The oracle's output belongs to an \emph{information space} $\cI$. We consider three distinct types of oracles:
  \begin{enumerate}
   \item \textbf{Deterministic first-order oracle.}  Denoted $\dfo$, with $\cI \subseteq \bbR^d \times (\bbR^d)^*$. When queried at a point $x \in \bbR^d$, the oracle returns 
   \begin{equation}
   \detO(x) = \prn*{F(x), \nabla F(x)}.
   \end{equation} 
    \item \textbf{Stochastic first-order oracle.} Denoted $\sfo$, with $\cI \subseteq \bbR^d \times \bbR^d$. The oracle is specified by a function $f:\bbR^d \times \cZ \rightarrow\bbR$ and a distribution $\cD$ over $\cZ$ with the property that $F(x) = \mathbb{E}_{z\sim\cD}\brk{f(x; z)}$ and $\sup_x \brk*{\mathbb{E}_{z\sim\cD}\norm{\nabla f(x; z) - F(x)}^2} \leq \sigma^2$. When queried at a point $x \in \bbR^d$, the oracle draws an independent $z \sim $ $\cD$ and returns
    \begin{equation}
        \sfo(x) = \prn*{f(x; z), \nabla f(x; z)}_{z \sim \cD}.
    \end{equation}
    \item \textbf{Stochastic global oracle.} Denoted $\sample$, with $\cI \subseteq (\bbR^d \mapsto \bbR)$. The oracle is specified by a function $f:\bbR^d \times \cZ \rightarrow\bbR$ and a distribution $\cD$ over $\cZ$ with the property that $F(x) = \mathbb{E}_{z\sim\cD}\brk{f(x; z)}$ and $\sup_x\brk*{\mathbb{E}_{z\sim\cD}\norm{\nabla f(x; z) - F(x)}^2} \leq \sigma^2$. When queried, the oracle draws an independent $z \in \cD$ and returns the complete specification of the function $f(\cdot, z)$, specifically, 
    \begin{equation}
        \sample(x) = \prn*{f(\cdot, z)}_{z \sim \cD}.
    \end{equation} 
    For consistency with the other oracles, we say that $\sample$ accepts an argument $x$, even though this argument is ignored. The global oracle captures the \emph{statistical learning} problem, in which $f(\cdot; z)$ is the loss of a model evaluated on an instance $z \sim \cD$, and this component function is fully known to the optimizer. Consequently, we use the terms ``global stochastic complexity'' and ``sample complexity'' interchangeably.

 \end{enumerate}
For the stochastic oracles, while $F$ itself may need to have properties such as convexity or smoothness, $f(\cdot;z)$ as defined need not have these properties unless stated otherwise.

\paragraph{Minimax oracle complexity.}
Given a function class $\cF$ and an oracle $\cO$ with information space $\cI$, we define the minimax oracle complexity of finding an $\epsilon$-stationary point as 
\begin{equation}
\mone(\cF,\cO) = \inf\crl*{m\in\bbN\ \middle|\ \inf_{A: \bigcup_{t \geq 0} \cI^t \mapsto \bbR^d}\ \sup_{F \in \cF}\ {\mathbb{E}}\norm{\nabla F(x_m)} \leq \epsilon },
\end{equation}
where $x_t \in \bbR^d$ is defined recursively as $x_t \ldef{} A(O(x_{0}), \ldots, O(x_{t-1}))  $ and the expectation is over the stochasticity of the oracle $\cO$.\footnote{See \pref{sec:first-order} for discussion of randomized algorithms.}

\paragraph{Recap: Deterministic first-order oracle complexity.}
To position our new results on stochastic optimization we must first recall what is known about the \emph{deterministic} first-order oracle complexity of finding near-stationary pointst. This complexity is tightly understood, with \[\mone(\FRHz,\dfo) = \tilde{\Theta}(\sqrt{HR}/\sqrt{\epsilon}),\quad\text{and}\quad\mone(\FDHz,\dfo) = \tilde{\Theta}\parens{\sqrt{H\Delta}/\epsilon},\] up to logarithmic factors \citep{nesterov2012make,carmon2017lower2}. The algorithm that achieves these rates is accelerated gradient descent (AGD). 

\section{Stochastic First-Order Complexity of Finding Stationary Points}\label{sec:first-order}
Interestingly, the usual variants of stochastic gradient descent do not appear to be optimal in the stochastic model. A first concern is that they do not yield the correct dependence on desired stationarity $\eps$.

As an illustrative example, let $F \in \FRHz$ and let any stochastic first-order oracle $\sfo$ be given. We adopt the naive approach of bounding stationarity by function value suboptimality. In this case the standard analysis of stochastic gradient descent (e.g., \cite{dekel2012optimal}) implies that after $m$ iterations, $\En\norm{\nabla F(x_m)} \leq O(\sqrt{H(\En{}F(x_m) - F(x^*))}) \leq O\parens{\sqrt{H\prn*{HR^2/m + \sigma R/\sqrt{m}}}}$, and thus \[\mone(\FRHz,\sfo) \leq O\parens{\frac{H^2R^2}{\epsilon^2} + \frac{H^2R^2\sigma^2}{\epsilon^4}}.\] The dependence on $\eps^{-4}$ is considerably worse than the $\eps^{-2}$ dependence enjoyed for function suboptimality.

In recent work, \citet{allen2018Howto} proposed a new \emph{recursive regularization} approach and used this in an algorithm called SGD3 that obtains the correct $\eps^{-2}$ dependence.\footnote{\citet{allen2018Howto} also show that some simple variants of SGD are able to reduce the poor $\epsilon^{-4}$ dependence to, e.g., $\eps^{-5/2}$, but they fall short of the $\epsilon^{-2}$ dependence one should hope for. Similar remarks apply for $F \in \FDHz$.}
 For any $F \in \FRHz$ and $\sfo$, SGD3 iteratively augments the objective with increasingly strong regularizers, ``zooming in'' on an approximate stationary point. Specifically, in the first iteration, SGD is used to find $\hat{x}_1$, an approximate minimizer of $F^{(0)}(x) = F(x)$. The objective is then augmented with a strongly-convex regularizer so $F^{(1)}(x) = F^{(0)}(x) + \lambda\norm{x-\hat{x}_1}^2$. In the second round, SGD is initialized at $\hat{x}_1$, and used to find $\hat{x}_2$, an approximate minimizer of $F^{(1)}$. This process is repeated, with $F^{(t)}(x) \ldef F^{(t-1)}(x) + 2^{t-1}\lambda\norm{x-\hat{x}_t}^2$ for each $t \in [T]$. \cite{allen2018Howto} shows that SGD3 find an $\eps$-stationary points using at most
\begin{equation}\label{eq:sgd3-ub}
   m \leq \tilde{O}\parens{\frac{HR}{\epsilon} + \frac{\sigma^2}{\epsilon^2}}
\end{equation}
local stochastic oracle queries. This oracle complexity has a familiar structure: it resembles the sum of an ``optimization term'' ($HR/\eps$) and a ``statistical term'' ($\sigma^{2}/\eps^{2}$). While we show that the statistical term is tight up to logarithmic factors (\pref{thm:first-order-lower-bound}), the optimization term does not match the $\Omega(\sqrt{HR / \epsilon})$ lower bound for the deterministic setting \citep{carmon2017lower2}.

\begin{algorithm}
\caption{Recursive Regularization Meta-Algorithm}
\begin{algorithmic} 
\label{alg:meta-algorithm}
\REQUIRE A function $F \in \FHL$, an oracle $\cO$ and an alloted number of oracle accesses $m$, an initial point $x_0$, and an optimization sub-routine $\cA$, with  $A = \cA[\cO,m/\floor{\log_2\frac{H}{\lambda}}]$.
\STATE $F^{(0)} \ldef{} F$, $\hat{x}_0\ldef{}x_0$, $T \leftarrow \floor{\log_2\frac{H}{\lambda}}$.
\FOR {$t = 1 \text{ to } T$}
\STATE $\hat{x}_t$ is output of $A$ used to optimize $F^{(t-1)}$ intitialized at $\hat{x}_{t-1}$
\STATE $F^{(t)}(x) \ldef{}  F(x) + \lambda\sum_{k=1}^{t} 2^{k-1} \nrm*{x - \hat{x}_k}^2$
\ENDFOR 
\RETURN $\hat{x}_T$
\end{algorithmic}
\end{algorithm}
Our first result is to close this gap. The key idea is to view SGD3 as a template algorithm, where the inner loop of SGD used in \cite{allen2018Howto} can be swapped out for an arbitrary optimization method $\cA$. This template, \pref{alg:meta-algorithm}, forms the basis for all the new methods in this paper.\footnote{The idea of replacing the sub-algorithm in SGD3 was also used by \cite{davis2018complexity}, who showed that recursive regularization with a projected subgradient method can be used to find near-stationary points for the Moreau envelope of any Lipschitz function.
} 

To obtain optimal complexity for the local stochastic oracle model we use a variant of the accelerated stochastic approximation method (``AC-SA'') due to \cite{ghadimi2012optimal} as the subroutine. Pseudocode for AC-SA is provided in \pref{alg:ac-sa}. We use a variant called $\gltwo$, see \pref{alg:gltwo}. The $\gltwo$ algorithm is equivalent to AC-SA, except the stepsize parameter is reset halfway through. This leads to slightly different dependence on the smoothness and domain size parameters, which is important to control the final rate when invoked within \pref{alg:meta-algorithm}. 

Toward proving the tight upper bound in \pref{tab:results}, we first show that \pref{alg:meta-algorithm} with $\gltwo$ as its subroutine guarantees fast convergence for strongly-convex domain-bounded objectives. 
\begin{restatable}{theorem}{accsgdthreesc}\label{thm:acc-sgd3-strongly-convex}
For any $F \in \FRHL$ and any $\sfo$, \pref{alg:meta-algorithm} using $\gltwo$ as its subroutine finds a point $\hat{x}$ with $\mathbb{E}\norm{\nabla F(\hat{x})} \leq \epsilon$ using 

\[
m \leq O\parens{\sqrt{\frac{H}{\lambda}}\log\parens{\frac{H}{\lambda}} + \sqrt{\frac{HR}{\epsilon}}\log\parens{\frac{H}{\lambda}} + \parens{\frac{\sqrt{H}\sigma}{\sqrt{\lambda}\epsilon}}^{\frac{2}{3}}\log\parens{\frac{H}{\lambda}} + \frac{\sigma^2}{\epsilon^2}\log^3\parens{\frac{H}{\lambda}}}
\]
total stochastic first-order oracle accesses. 
\end{restatable}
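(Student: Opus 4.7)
The plan is to analyze \pref{alg:meta-algorithm} instantiated with $\gltwo$ as a two-level procedure: an outer recursion that progressively strengthens the quadratic regularizer, and an inner stochastic solver invoked on each subproblem $F^{(t-1)}$. The key structural observation is that each new term $\lambda 2^{t-1}\nrm*{x-\hat{x}_t}^2$ has zero gradient at $x=\hat{x}_t$, so $\nabla F^{(t)}(\hat{x}_t) = \nabla F^{(t-1)}(\hat{x}_t)$. Combined with $\mu_t$-strong convexity of $F^{(t)}$, where $\mu_t = \lambda(2^{t+1}-1)$, this makes the subproblem initial distance $\nrm*{\hat{x}_t - x_{t+1}^*}$ (with $x_{t+1}^* \ldef \argmin F^{(t)}$) shrink geometrically from round to round, which is what will power the overall improvement over the SGD3 rate.

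The first step is to import the convergence guarantee of $\gltwo$ applied to a single $L$-smooth, $\mu$-strongly convex stochastic objective queried through $\sfo$: starting from $y_0$ with $\nrm*{y_0-x^*}\leq R_0$, after $n$ queries it produces an iterate whose expected function suboptimality is bounded by a sum of an accelerated optimization term in $L, \mu, R_0$, a stochastic term of order $\sigma^2/(\mu n)$, and a mixed term responsible for the $(\,\cdot\,)^{2/3}$ dependence in the theorem. Substituting $L=L_{t-1}\leq H+2\lambda(2^{t-1}-1)$, $\mu = \mu_{t-1} = \lambda(2^t-1)$, $R_0 = R_t \ldef \nrm*{\hat{x}_{t-1}-x_t^*}$, and $n = n_t \ldef \lfloor m/T\rfloor$ gives a per-round bound $B_t$ on $\En[\delta_t]$, where $\delta_t \ldef F^{(t-1)}(\hat{x}_t) - F^{(t-1)}(x_t^*)$.

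The second step is an induction on $t$ that maintains three invariants: (i) $\En\nrm*{\hat{x}_t - x_t^*}^2 \leq 2\En[\delta_t]/\mu_{t-1}$ by strong convexity, (ii) $\En\nrm*{\nabla F^{(t-1)}(\hat{x}_t)}^2 \leq 2L_{t-1}\En[\delta_t]$ by smoothness, and (iii) $R_{t+1}\leq \nrm*{\nabla F^{(t)}(\hat{x}_t)}/\mu_t = \nrm*{\nabla F^{(t-1)}(\hat{x}_t)}/\mu_t$ by the identity above. Together (ii) and (iii) express $R_{t+1}^2$ in terms of $\En[\delta_t]$, which when plugged into the subproblem bound $B_{t+1}$ closes the induction with base case $R_1 = \nrm*{x_0-x^*}\leq R$.

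The last step is to convert the sequence of per-round guarantees into a bound on $\En\nrm*{\nabla F(\hat{x}_T)}$. Writing
\[
\nabla F(\hat{x}_T) \;=\; \nabla F^{(T-1)}(\hat{x}_T) \;-\; 2\lambda \sum_{k=1}^{T-1} 2^{k-1}\bigl(\hat{x}_T - \hat{x}_k\bigr),
\]
decomposing each $\hat{x}_T-\hat{x}_k$ as a telescoping sum $\sum_{j=k+1}^T(\hat{x}_j - \hat{x}_{j-1})$, and controlling every increment via invariant (i) yields a bound of the form $\En\nrm*{\nabla F(\hat{x}_T)} \lesssim \sqrt{L_{T-1}\En[\delta_T]} + \lambda \sum_{j=2}^T 2^{j-1}\sqrt{\En[\delta_j]/\mu_{j-1}}$. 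The main obstacle is calibrating the per-round budget $n_t$ (equivalently, $B_t$) so that this weighted sum is at most $\eps$ while the $n_t$ sum to the total $m$ stated in the theorem. Because $\mu_{t-1}$ and (for $t$ large enough) $L_{t-1}$ both grow geometrically in $t$, each of the four contributions to $B_t$ produces a geometric series dominated by its worst round; the $\log(H/\lambda)$ factors come from distributing the oracle budget across $T=\lfloor\log_2(H/\lambda)\rfloor$ rounds, and the extra $\log^3(H/\lambda)$ factor on the $\sigma^2/\eps^2$ term comes from needing $B_t$ to shrink poly-logarithmically in $t$ to compensate for the $2^{j-1}$ weights in the telescoping bound above.
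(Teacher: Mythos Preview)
Your overall skeleton is close to the paper's: both arguments reduce $\En\nrm*{\nabla F(\hat{x}_T)}$ to a weighted sum of the form $\sum_t\sqrt{2^t\lambda\,\delta_t}$, control each $\delta_t$ via the $\gltwo$ guarantee for strongly convex problems, and then exploit the recursive structure (the initial distance at round $t$ is governed by $\delta_{t-1}$). The paper uses \pref{lem:zeyuan_auxillary_lemma} (Allen--Zhu's Claim~6.2) together with the intermediary point $x_{T-1}^*$ rather than your telescoping of $\hat{x}_T-\hat{x}_k$, and it closes the recursion via a self-referential inequality (bounding $\sum_t\sqrt{2^t\lambda\,\delta_t}$ in terms of itself and rearranging) rather than by unrolling. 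These are largely cosmetic differences and your route would work just as well \emph{except} for one quantitative issue.

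The step that actually loses something is your chain (ii)+(iii) for the next-round initial distance. You bound $R_{t+1}\leq\nrm*{\nabla F^{(t-1)}(\hat{x}_t)}/\mu_t$ and then $\nrm*{\nabla F^{(t-1)}(\hat{x}_t)}^2\leq 2L_{t-1}\delta_t$, giving $R_{t+1}^2\lesssim L_{t-1}\delta_t/\mu_t^2$. Since $L_{t-1}\asymp H$ for every $t\leq T$ while $\mu_t\asymp 2^t\lambda$, this carries an extra condition-number factor $L_{t-1}/\mu_t\asymp H/(2^t\lambda)$ compared to the sharper bound $R_{t+1}^2\leq \delta_t/(2^{t-1}\lambda)$ that the paper uses (this is part~2 of \pref{lem:zeyuan_auxillary_lemma}, and it follows directly from $F^{(t)}(x_{t+1}^*)\leq F^{(t)}(\hat{x}_t)=F^{(t-1)}(\hat{x}_t)$ together with $F^{(t)}(x_{t+1}^*)\geq F^{(t-1)}(x_t^*)+2^{t-1}\lambda\nrm*{x_{t+1}^*-\hat{x}_t}^2$, no detour through the gradient needed). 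If you propagate your weaker bound through the recursion, the ``contraction'' in the optimization term of $\gltwo$ requires $m/T\gtrsim (H/\lambda)^{3/4}$ rather than $\sqrt{H/\lambda}$, so the first term of the stated complexity would come out as $(H/\lambda)^{3/4}\log(H/\lambda)$ instead of $\sqrt{H/\lambda}\log(H/\lambda)$. The fix is just to replace (ii)+(iii) by the two-line direct argument above; the rest of your plan then goes through.

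One smaller remark: your account of the $\log^3(H/\lambda)$ factor is slightly off. The leading stochastic term in $\gltwo$ is $\sigma^2/(\mu_{t-1}\cdot m/T)$, so after multiplying by $2^t\lambda$ and taking square roots every round contributes the \emph{same} amount $\sigma\sqrt{T/m}$ to the weighted sum; summing $T$ equal terms gives $\sigma T^{3/2}/\sqrt{m}$, and solving for $m$ produces the $\log^3$. It is not that $B_t$ needs to shrink polylogarithmically.
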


The analysis of this algorithm is detailed in \pref{app:proofs-first-order} and carefully matches the original analysis of SGD3 \citep{allen2018Howto}. The essential component of the analysis is \pref{lem:zeyuan_auxillary_lemma}, which provides a bound on $\norm{\nabla F(\hat{x})}$ in terms of the optimization error of each invocation of $\gltwo$ on the increasingly strongly convex subproblems $F^{(t)}$. 

\begin{algorithm}[t]
\caption{AC-SA}\label{alg:ac-sa}
\begin{algorithmic}  
\REQUIRE A function $F \in \FRHL$, a stochastic first-order oracle $\sfo$, and an alloted number of oracle accesses $m$
\STATE $x_0^{ag} = x_0$
\FOR{$t=1,2,\dots,m$}
\STATE $\alpha_t \leftarrow \frac{2}{t+1}$ \\
\STATE $\gamma_t \leftarrow \frac{4H}{t(t+1)}$
\STATE $x_t^{md} \leftarrow \frac{(1-\alpha_t)(\lambda+\gamma_t)}{\gamma_t + (1-\alpha_t^2)\lambda}x_{t-1}^{ag} + \frac{\alpha_t\parens{(1-\alpha_t)\lambda + \gamma_t}}{\gamma_t + (1-\alpha_t^2)\lambda}x_{t-1}$
\STATE $\nabla f(x^{md}_t;z_t) \leftarrow \sfo(x_t^{md})$
\STATE $x_t \leftarrow \frac{\alpha_t\lambda}{\lambda+\gamma_t}x_t^{md} + \frac{(1-\alpha_t)\lambda + \gamma_t}{\lambda+\gamma_t}x_{t-1} - \frac{\alpha_t}{\lambda+\gamma_t}\nabla f(x^{md}_t;z_t)$
\STATE $x_t^{ag} \leftarrow \alpha_t x_t + (1-\alpha_t)x_{t-1}^{ag}$
\ENDFOR
\RETURN $x_m^{ag}$
\end{algorithmic}
\end{algorithm}

Our final result for non-strongly convex objectives uses \pref{alg:meta-algorithm} with $\gltwo$ on the regularized objective $\tilde{F}(x) = F(x) + \frac{\lambda}{2}\norm{x-x_0}^2$. The performance guarantee is as follows, and concerns both domain-bounded and range-bounded functions.
\begin{restatable}{corol}{accsgdthreeboundeddomain}\label{cor:sgd3-bounded-domain}
For any $F \in \FRHz$ and any $\sfo$, \pref{alg:meta-algorithm} with $\gltwo$ as its subroutine applied to $F(x) + \frac{\lambda}{2}\norm{x-x_0}^2$ for $\lambda = \Theta\parens{\min\crl*{\frac{\epsilon}{R}, \frac{H\epsilon^4}{\sigma^4\log^4(\sigma/\epsilon)}}}$ yields a point $\hat{x}$ such that $\mathbb{E}\norm{\nabla F(\hat{x})} \leq \epsilon$ using 
\[
m \leq O\parens{\sqrt{\frac{HR}{\epsilon}}\log\parens{\frac{HR}{\epsilon}} + \frac{\sigma^2}{\epsilon^2}\log^3\parens{\frac{\sigma}{\epsilon}}}
\]
total stochastic first-order oracle accesses. \\
For any $F \in \FDHz$ and any $\sfo$, the same algorithm with $\lambda = \Theta\parens{\min\crl*{\frac{\epsilon^2}{\Delta}, \frac{H\epsilon^4}{\sigma^4\log^4(\sigma/\epsilon)}}}$ yields a point $\hat{x}$ with $\mathbb{E}\norm{\nabla F(\hat{x})} \leq \epsilon$ using 
\[
m \leq O\parens{\frac{\sqrt{H\Delta}}{\epsilon}\log\parens{\frac{\sqrt{H\Delta}}{\epsilon}} + \frac{\sigma^2}{\epsilon^2}\log^3\parens{\frac{\sigma}{\epsilon}}}
\]
total stochastic first-order oracle accesses.
\end{restatable}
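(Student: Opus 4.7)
The plan is to obtain this corollary as a black-box reduction from the strongly convex case, \pref{thm:acc-sgd3-strongly-convex}, by adding an $x_0$-centered quadratic regularizer and then undoing its effect on the gradient.

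Define the surrogate $\tilde F(x) = F(x) + \tfrac{\lambda}{2}\nrm*{x-x_0}^2$, which is $(H+\lambda)$-smooth and $\lambda$-strongly convex. First I would establish two ``radius''-type bounds for $\tilde F$ so that \pref{thm:acc-sgd3-strongly-convex} applies with clean parameters. Let $\tilde x^\ast \ldef \argmin \tilde F$. For the domain-bounded case, comparing $\tilde F(\tilde x^\ast)\leq \tilde F(x^\ast)$ with the strong-convexity lower bound $\tilde F(\tilde x^\ast)\geq F(x^\ast) + \tfrac{\lambda}{2}\nrm*{\tilde x^\ast - x_0}^2$ gives $\nrm*{\tilde x^\ast - x_0}\leq R$. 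For the range-bounded case, $\tilde F(x_0) - \tilde F(\tilde x^\ast)\leq F(x_0)-F(x^\ast)\leq \Delta$, and $\lambda$-strong convexity of $\tilde F$ then yields $\nrm*{\tilde x^\ast - x_0}\leq \sqrt{2\Delta/\lambda}$. In each case I can invoke \pref{thm:acc-sgd3-strongly-convex} on $\tilde F$ with effective smoothness $O(H)$ (since $\lambda\le H$) to produce $\hat x$ with $\En\nrm*{\nabla \tilde F(\hat x)}\leq \tilde\eps$.

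Next I would translate stationarity of $\tilde F$ back to stationarity of $F$. Since $\nabla F(x) = \nabla \tilde F(x) - \lambda(x-x_0)$, the triangle inequality gives $\nrm*{\nabla F(\hat x)} \leq \nrm*{\nabla \tilde F(\hat x)} + \lambda\nrm*{\hat x - x_0}$, and by $\lambda$-strong convexity of $\tilde F$, $\nrm*{\hat x - \tilde x^\ast}\leq \nrm*{\nabla \tilde F(\hat x)}/\lambda$. Combining with the radius bound on $\tilde x^\ast$ above yields
\[
\nrm*{\nabla F(\hat x)}\leq 2\nrm*{\nabla \tilde F(\hat x)} + \lambda \cdot \min\!\left\{R,\ \sqrt{2\Delta/\lambda}\right\},
\]
so it suffices to choose $\tilde\eps = \Theta(\eps)$ and pick $\lambda$ small enough that the perturbation term is $O(\eps)$: $\lambda \leq O(\eps/R)$ in the domain-bounded case and $\lambda\leq O(\eps^2/\Delta)$ in the range-bounded case.

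The last step is the parameter tuning. Plugging $\tilde\eps=\Theta(\eps)$ into \pref{thm:acc-sgd3-strongly-convex} gives four terms involving $\lambda$; the interfering ones are $\sqrt{H/\lambda}\log(H/\lambda)$ and the ``crossed'' term $(\sqrt{H}\sigma/(\sqrt{\lambda}\eps))^{2/3}\log(H/\lambda)$. For the choice $\lambda = \Theta(\eps/R)$ (resp.\ $\Theta(\eps^2/\Delta)$), the first term becomes $\sqrt{HR/\eps}\log$ (resp.\ $\sqrt{H\Delta}/\eps \cdot \log$), which is already within the target bound, while the crossed term is of the form $a^{2/3}b^{1/3}$ with $a = \sqrt{HR/\eps}$ (resp.\ $\sqrt{H\Delta}/\eps$) and $b = \sigma^2/\eps^2$, and so by the weighted AM-GM inequality $a^{2/3}b^{1/3}\leq a + b$ it is dominated by the other two terms. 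Taking the further min of $\lambda$ with $\Theta(H\eps^4/(\sigma^4\log^4(\sigma/\eps)))$ does not affect the perturbation bound but caps $H/\lambda$ by $O((\sigma/\eps)^4)$, which is what replaces $\log(HR/\eps)$ by $\log(\sigma/\eps)$ in the third factor of the final rate, yielding exactly the stated bounds. The main obstacle is this bookkeeping: simultaneously ensuring (i) the perturbation term $\lambda\nrm*{\hat x - x_0}$ is at most $\eps$, (ii) the $\sqrt{H/\lambda}$ term does not exceed the desired optimization rate, and (iii) the crossed $\lambda^{-1/3}$ term is absorbed into the other two via Young's inequality; taking $\lambda$ to be the min of two distinct scales is precisely what makes all three happen at once.
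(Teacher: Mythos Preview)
Your overall strategy—regularize, apply \pref{thm:acc-sgd3-strongly-convex} to the surrogate, and undo the perturbation via the strong-convexity bound $\nrm*{\hat x-\tilde x^\ast}\leq \nrm*{\nabla\tilde F(\hat x)}/\lambda$—is exactly what the paper does (this is their \pref{lem:closeness-of-optima}). The difference is in how $\lambda$ is tuned, and this is where your argument has a gap.

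The paper does \emph{not} invoke \pref{thm:acc-sgd3-strongly-convex} as a black-box complexity statement. Instead it reaches inside the proof and uses the intermediate bound \pref{eq:thm1-gradient-bound-m}, which controls $\En\nrm*{\nabla\tilde F(\hat x)}$ directly as a function of $m$ and $\lambda$. It then sets $\lambda = \Theta\bigl(H\log^2(m)/m^2\bigr)$—so $\lambda$ is chosen as a function of $m$, not of the problem parameters. This choice simultaneously forces $H/\lambda\leq m^2$ (so all $\log(H/\lambda)$ factors become $\log m$) and makes the mixed term $\sqrt{H}\sigma/(\sqrt{\lambda}\,m^{3/2})$ collapse into $\sigma/\sqrt{m}$. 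Only after solving for $m$ does the paper read off $\lambda$ and observe that it equals the $\Theta(\min\{\cdot,\cdot\})$ expression in the statement; that min is a \emph{consequence}, not an input.

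Your black-box approach with $\lambda=\eps/R$ already yields, via AM--GM on the cross term, the bound
\[
O\!\Bigl(\sqrt{HR/\eps}\,\log(HR/\eps)\;+\;(\sigma^2/\eps^2)\,\log^3(HR/\eps)\Bigr),
\]
which is qualitatively correct but has $\log^3(HR/\eps)$ on the statistical term rather than $\log^3(\sigma/\eps)$. Your attempt to repair this by ``taking the further min of $\lambda$ with $\Theta(H\eps^4/(\sigma^4\log^4))$'' and claiming this ``caps $H/\lambda$ by $O((\sigma/\eps)^4)$'' goes in the wrong direction: decreasing $\lambda$ \emph{increases} $H/\lambda$, so $H/\lambda=\max\{HR/\eps,\;(\sigma/\eps)^4\log^4\}$, not the min. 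With this $\lambda$ the first and third terms of \pref{thm:acc-sgd3-strongly-convex} get worse, not better, and in the regime $HR/\eps\gg(\sigma/\eps)^4$ you still have $\log(H/\lambda)=\log(HR/\eps)$ multiplying the $\sigma^2/\eps^2$ term. So the step that is supposed to convert $\log(HR/\eps)$ into $\log(\sigma/\eps)$ does not go through; to get the exact log factors in the corollary you need the paper's parameterization $\lambda=\lambda(m)$ together with \pref{eq:thm1-gradient-bound-m}.
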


This follows easily from \pref{thm:acc-sgd3-strongly-convex} and is proven in \pref{app:proofs-first-order}. Intuitively, when $\lambda$ is chosen appropriately, the gradient of the regularized objective $\tilde{F}$ does not significantly deviate from the gradient of $F$, but the number of iterations required to find an $O(\epsilon)$-stationary point of $\tilde{F}$ is still controlled. 

We now provide nearly-tight lower bounds for the stochastic first-order oracle complexity. A notable feature of the lower bound is to show that show some of the logarithmic terms in the upper bound---which are not present in the optimal oracle complexity for function value suboptimality---are necessary.
\begin{restatable}{theorem}{folowerbound}\label{thm:first-order-lower-bound}
For any $H,\Delta,R,\sigma > 0$, any $\epsilon \leq \frac{HR}{8}$, the stochastic first-order oracle complexity for range-bounded functions is lower bounded as
\[
    \mone(\FRHz,\sfo) \geq \Omega\parens{\sqrt{\frac{HR}{\epsilon}} + \frac{\sigma^2}{\epsilon^2}\log\parens{\frac{HR}{\epsilon}}}.
\]
For any $\epsilon \leq \sqrt{\frac{H\Delta}{8}}$, the stochastic first-order complexity for domain-bounded functions is lower bounded as
\[
    \mone(\FDHz,\sfo) \geq \Omega\parens{\frac{\sqrt{H\Delta}}{\epsilon} + \frac{\sigma^2}{\epsilon^2}\log\parens{\frac{H\Delta}{\epsilon^2}}}.
\]
\end{restatable}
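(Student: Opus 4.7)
The lower bound is additive, so the plan is to prove the two terms separately; the sum of two valid lower bounds on $\mone$ is itself a valid lower bound up to constants. The first term, $\sqrt{HR/\epsilon}$ (respectively $\sqrt{H\Delta}/\epsilon$), is inherited from the deterministic first-order lower bound of \cite{carmon2017lower2}: an $\sfo$ with noise level $\sigma^2$ specializes to a $\detO$ by taking $f(\cdot;z)\equiv F(\cdot)$, so any deterministic lower bound transfers verbatim to the stochastic model.

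\textbf{Statistical term, baseline $\sigma^2/\epsilon^2$.} For the $\sigma^2/\epsilon^2$ factor I would use the quadratic family $F_v(x)=\tfrac{H}{2}\nrm{x-v}^2$, indexed by $v$ chosen so each $F_v\in\FRHz$ (for $\FDHz$, scale so $\nrm{v}^2\le 2\Delta/H$), with stochastic oracle $\nabla f(x;z)=\nabla F_v(x)+\xi$ and $\xi\sim\mc{N}(0,\sigma^2 I)$. An $\epsilon$-stationary point of $F_v$ must satisfy $\nrm{x-v}\le\epsilon/H$, so hypotheses with $\nrm{v-v'}>2\epsilon/H$ have disjoint sets of $\epsilon$-stationary points. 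A two-hypothesis Le Cam argument with $v_\pm$ at separation of order $\epsilon/H$ yields per-query KL of order $\epsilon^2/\sigma^2$, and hence a tensorization bound $m\geq\Omega(\sigma^2/\epsilon^2)$. The same construction, rescaled by $\sqrt{\Delta/H}$ in place of $R$, handles the range-bounded class, producing the matching two-point bound there.

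\textbf{Logarithmic factor and the main obstacle.} The delicate part is the extra $\log(HR/\epsilon)$ (respectively $\log(H\Delta/\epsilon^2)$) factor. A naive Fano bound on an $\epsilon/H$-packing of $\ball(0,R)$ of cardinality $(HR/\epsilon)^{\Omega(1)}$ does not work: the maximum pairwise KL across the packing is of order $H^2R^2/\sigma^2$, which swamps $\log|\mc{V}|$ and gives a vacuous bound. My plan is instead a chained/multi-scale construction at $k=\Theta(\log(HR/\epsilon))$ geometric scales $R_1>R_2>\cdots>R_k$ (with $R_1\asymp R$ and $R_k\asymp\epsilon/H$): design the hard function so that (i) producing an $\epsilon$-stationary point forces the algorithm to resolve the instance at \emph{every} scale, and (ii) queries relevant to one scale are (essentially) uninformative about the others, so each scale independently demands $\Omega(\sigma^2/\epsilon^2)$ queries. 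Summing yields $(\sigma^2/\epsilon^2)\log(HR/\epsilon)$. The main technical obstacle, and the part I expect to require the most care, is verifying (ii) rigorously against arbitrary adaptive algorithms while simultaneously keeping the aggregated function in $\FRHz$ (or $\FDHz$) with the prescribed smoothness and domain/range budgets---likely through a resisting-oracle argument or an explicit posterior-concentration analysis showing that the algorithm's posterior over $v$ collapses on one scale at a time.
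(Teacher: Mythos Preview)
Your decomposition into a deterministic term plus a statistical term matches the paper, and the deterministic part (inheriting the \cite{carmon2017lower2} bound by taking $\sigma=0$) is exactly what the paper does.

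For the statistical term with the logarithm, however, the paper takes a quite different and more direct route. Rather than a multi-scale packing in $\bbR^d$, the paper works entirely in \emph{one dimension} and reduces to \emph{noisy binary search} (NBS). Partition $[0,R]$ into $N=\Theta(HR/\epsilon)$ equal sub-intervals with endpoints $a_1<\cdots<a_N$, and hide an index $j^*$. The population function $F$ has derivative $-2\epsilon$ to the left of $a_{j^*}$, $+2\epsilon$ to the right of $a_{j^*+1}$, and a linear ramp of slope $H$ in between; thus $F$ is convex, $H$-smooth, and its only $\epsilon$-stationary points lie in $[a_{j^*},a_{j^*+1})$. Each stochastic gradient at $x$ reveals (at most two) noisy answers to queries of the form ``is $j^*<j$?'' with correctness probability $\tfrac12+\epsilon/\sigma$. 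Hence any algorithm that outputs an $\epsilon$-stationary point solves NBS on $N$ elements with noise parameter $p=\epsilon/\sigma$, and the classical NBS lower bound $\Omega(\log(N)/p^2)$ of \cite{feige1994computing,karp2007noisy} immediately gives $\Omega\bigl((\sigma^2/\epsilon^2)\log(HR/\epsilon)\bigr)$. The $\Delta$ case follows by taking $R=\Delta/(2\epsilon)$.

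This sidesteps precisely the obstacle you flagged: there is no need to argue that adaptive queries are ``uninformative across scales,'' because the adaptivity is already baked into the NBS lower bound, which holds against arbitrary adaptive comparison strategies. Your multi-scale program is not obviously wrong, but making step~(ii) rigorous---showing that an adaptive algorithm cannot amortize queries across your $k$ geometric scales while keeping the aggregate function convex, $H$-smooth, and within the domain/range budget---is essentially reproving a sequential-testing lower bound from scratch. The NBS reduction buys you that for free. Your separate Le~Cam argument for the bare $\sigma^2/\epsilon^2$ factor is fine but superfluous once the NBS bound is in hand.
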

The proof, detailed in \pref{app:proofs-lower-bounds}, combines the existing lower bound on the deterministic first-order oracle complexity \citep{carmon2017lower2} with a new lower bound for the statistical term. The approach is to show that any algorithm for finding near-stationary points can be used to solve noisy binary search (NBS), and then apply a known lower bound for NBS \citep{feige1994computing, karp2007noisy}. It is possible to extend the lower bound to randomized algorithms; see discussion in \cite{carmon2017lower2}.

\begin{algorithm}[t]
\caption{$\gltwo$}\label{alg:gltwo}
\begin{algorithmic} 
\REQUIRE A function $F \in \FRHL$, a stochastic first-order oracle $\sfo$, and an alloted number of oracle accesses $m$
\STATE $x_1 \leftarrow$ AC-SA$\prn*{F, x_0, \frac{m}{2}}$
\STATE $x_2 \leftarrow$ AC-SA$\prn*{F, x_1, \frac{m}{2}}$
\RETURN $x_2$
\end{algorithmic}
\end{algorithm}

\section{Sample Complexity of Finding Stationary Points}\label{sec:sample-complexity}
Having tightly bound the stochastic first-order oracle complexity of finding approximate stationary points, we now turn to sample complexity. If the heuristic reasoning that stochastic first-order complexity should decompose into sample complexity and deterministic first-order complexity ($\smash{\mone(\cF,\sfo)} \approx \smash{\mone(\cF,\dfo)} + \mone(\cF,\sample)$) is correct, then one would expect that the sample complexity should be $\tilde{O}(\sigma^2/\epsilon^2)$ for both domain-bounded and range-bounded function.

A curious feature of this putative sample complexity is that it does not depend on the smoothness of the function. This is somewhat surprising since if the function is non-smooth in the vicinity of its minimizer, there may only be a single $\epsilon$-stationary point, and an algorithm would need to return \emph{exactly} that point using only a finite sample. We show that the sample complexity is in fact almost independent of the smoothness constant, with a mild logarithmic dependence. We also provide nearly tight lower bounds. 

For the global setting, a natural algorithm to try is regularized empirical risk minimization (RERM), which returns $\hat{x} = \argmin_x \frac{1}{m}\sum_{i=1}^mf(x;z_i) + \frac{\lambda}{2}\norm{x-x_0}^2$.\footnote{While it is also tempting to try constrained ERM, this does not succeed even for function value suboptimality \citep{shalev2009stochastic}.} For any domain-bounded function $F \in \FRHz$, a standard analysis of ERM based on stability \citep{shalev2009stochastic} shows that $\mathbb{E}\norm{\nabla F(\hat{x})} \leq \mathbb{E}\sqrt{2H(F(\hat{x}) - F^*)} + \lambda R \leq O(\sqrt{H^{3}R^{2}/\lambda m} + \lambda R)$. Choosing $m = \Omega((HR)^3/\epsilon^3)$ and $\lambda = \Theta(\epsilon/R)$ yields an $\epsilon$-stationary point. This upper bound, however,  has two shortcomings.  First, it scales with $\smash{\epsilon^{-3}}$ rather than $\smash{\epsilon^{-2}}$ that we hoped for and, second,  it does not approach $1$ as $\sigma \rightarrow 0$, which one should expect in the noise-free case. The stochastic first-order algorithm from the previous section has better sample complexity, but the number of samples still does not approach one when $\sigma\rightarrow{}0$.

We fix both issues by combining regularized ERM with the recursive regularization approach, giving an upper bound that nearly matches the sample complexity lower bound $\Omega(\sigma^2/\epsilon^2)$. They key tool here is a sharp analysis of regularized ERM---stated in the appendix as \pref{thm:erm_variance}---that obtains the correct dependence on the variance $\sigma^{2}$.

As in the previous section, we first prove an intermediate result for the strongly convex case. Unlike \pref{sec:first-order}, where $F$ was required to be convex but the components $f(\cdot;z)$ were not required to be, we must assume here either that $f(\cdot;z)$ is convex for all $z$.\footnote{We are not aware of any analysis of ERM for strongly convex losses that does not make such an assumption. It is interesting to see whether this can be removed.} %or at least that $\frac{1}{m}\sum_{i=1}^m f(\cdot;z_i)$ is convex with high probability. 
\begin{restatable}{theorem}{samplecomplexitySC}\label{thm:sample-complexity-strong-convexity}
For any $F \in \FHL$ and any global stochastic oracle $\sample$ with the restriction that $f(\cdot;z)$ is convex for all $z$, \pref{alg:meta-algorithm} with ERM as its subroutine finds $\hat{x}$ with $\mathbb{E}\norm{\nabla F(\hat{x})} \leq \epsilon$ using at most
\[
m \leq O\parens{\frac{\sigma^2}{\epsilon^2}\log^3\parens{\frac{H}{\lambda}}}
\]
total samples.
\end{restatable}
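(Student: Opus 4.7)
The plan is to instantiate the recursive regularization meta-algorithm (\pref{alg:meta-algorithm}) with regularized ERM as the inner subroutine $\cA$, run for $T = \lfloor\log_2(H/\lambda)\rfloor$ stages, and allocate $n = m/T$ samples per stage. The proof parallels that of \pref{thm:acc-sgd3-strongly-convex}: invoke \pref{lem:zeyuan_auxillary_lemma} to reduce $\En\nrm*{\nabla F(\hat x_T)}$ to a sum of per-stage excess risks, then invoke the variance-adaptive ERM bound \pref{thm:erm_variance} to control each excess risk using only $\sigma^2$ and the stage-$t$ strong convexity.

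First I would track the parameters of the stage-$t$ subproblem $F^{(t-1)}$. Each prior quadratic regularizer $\lambda\, 2^{k-1}\nrm*{x-\hat x_k}^2$ contributes $\lambda\, 2^{k}$ to both smoothness and strong convexity, so $F^{(t-1)}$ is $\lambda_t$-strongly convex and $H_t$-smooth with $\lambda_t = \lambda(2^t-1)$ and $H_t = H + \lambda(2^t-2)$; for $t \leq T$ this gives $\lambda_t = \Theta(\lambda\, 2^t)$ and $H_t = \Theta(H)$. Because the added regularizers are deterministic, the stage-$t$ stochastic components $f(\cdot;z) + \lambda\sum_{k<t} 2^{k-1}\nrm*{x - \hat x_k}^2$ remain convex (using the hypothesis that $f(\cdot;z)$ is convex) and preserve the original gradient variance $\sigma^2$.

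Second, applying \pref{thm:erm_variance} to the stage-$t$ ERM subproblem on $n = m/T$ samples yields
\[
\En\brk*{F^{(t-1)}(\hat x_t) - \min_{x} F^{(t-1)}(x)} \leq O\prn*{\frac{\sigma^2}{\lambda_t\, n}} = O\prn*{\frac{\sigma^2}{\lambda\, 2^t\, n}}.
\]
Plugging these per-stage excess risks into \pref{lem:zeyuan_auxillary_lemma}---whose structure is calibrated for geometrically growing strong convexities---produces a final bound on $\En\nrm*{\nabla F(\hat x_T)}$ of order $\sigma/\sqrt{n}$ times polylogarithmic factors in $H/\lambda$. Setting this at most $\epsilon$ yields $n = O(\sigma^2 \log^2(H/\lambda)/\epsilon^2)$, and the total sample complexity is $T \cdot n = O(\sigma^2 \log^3(H/\lambda)/\epsilon^2)$, matching the theorem.

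The main obstacle is \pref{thm:erm_variance} itself: obtaining a variance-adaptive ERM excess-risk bound of the form $O(\sigma^2/(\lambda m))$ for $\lambda$-strongly convex objectives requires a stability analysis that is only known to go through when each component $f(\cdot;z)$ is convex, which is the one additional hypothesis imposed in the theorem statement. Once that bound is in hand, the structural reason recursive regularization delivers only polylogarithmic dependence on $H/\lambda$ is that the geometric growth of $\lambda_t$ makes the per-stage excess risks decay rapidly enough that the aggregation in \pref{lem:zeyuan_auxillary_lemma} telescopes, contributing only polylogarithmic---rather than polynomial---overhead in $H/\lambda$.
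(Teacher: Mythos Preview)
Your high-level plan is right, but there is a genuine gap at the key step. \pref{thm:erm_variance} does \emph{not} give the excess-risk bound you invoke; it gives the parameter-distance bound
\[
\En\nrm*{\hat x_t - x^*_{t-1}}^2 \;\leq\; \frac{4\sigma^2}{\mu_t^{2}\,n},
\qquad \mu_t \;=\;\lambda\,(2^{t}-1).
\]
To apply \pref{lem:zeyuan_auxillary_lemma} you need a bound on $\delta_t = \En\brk*{F^{(t-1)}(\hat x_t) - F^{(t-1)}(x^*_{t-1})}$. The only way to extract $\delta_t$ from the displayed inequality is via smoothness of $F^{(t-1)}$, which gives $\delta_t \leq (H_t/2)\cdot 4\sigma^2/(\mu_t^2 n) = O\bigl(H\sigma^2/(\lambda^2 2^{2t} n)\bigr)$. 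Plugging this into the aggregation $\sum_t\sqrt{2^t\lambda\,\delta_t}$ from \pref{lem:zeyuan_auxillary_lemma}, each summand is $O\bigl(\sqrt{H/(\lambda 2^t)}\cdot \sigma/\sqrt{n}\bigr)$, a geometric series dominated by $t=1$, so the whole sum is $O\bigl(\sqrt{H/\lambda}\cdot \sigma/\sqrt{n}\bigr)$. That yields $m = O\bigl((H/\lambda)\,\sigma^2\log(H/\lambda)/\epsilon^2\bigr)$, which is \emph{polynomial} in $H/\lambda$, not polylogarithmic as the theorem asserts. The loss is exactly the round trip ``parameter distance $\to$ excess risk via smoothness $\to$ parameter distance via strong convexity'' inside \pref{lem:zeyuan_auxillary_lemma}, which costs $\sqrt{H_t/\mu_t}$ and is $\sqrt{H/\lambda}$ at the early stages.

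The paper's proof avoids this round trip by \emph{not} invoking \pref{lem:zeyuan_auxillary_lemma}. It instead derives a parameter-distance version of the aggregation directly: with $P_k = \sum_{t\leq k} 2^t\nrm*{\hat x_t - x^*_k}$, a short telescoping argument shows $P_k - P_{k-1} \leq 3\cdot 2^k\nrm*{\hat x_k - x^*_{k-1}}$, leading to
\[
\nrm*{\nabla F(\hat x_T)} \;\leq\; 36\,\lambda\sum_{t=1}^T 2^t\,\nrm*{\hat x_t - x^*_{t-1}}.
\]
Now the parameter-distance bound from \pref{thm:erm_variance} plugs in \emph{without} any smoothness conversion: $\lambda 2^t \cdot \En\nrm*{\hat x_t - x^*_{t-1}} \leq O(\sigma/\sqrt{n})$ for every $t$, summing to $O(T\sigma/\sqrt{n})$ and giving the claimed $m = O\bigl(\sigma^2\log^3(H/\lambda)/\epsilon^2\bigr)$. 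So the needed fix is to replace the black-box use of \pref{lem:zeyuan_auxillary_lemma} by this parameter-distance telescoping; everything else in your plan is sound.
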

The proof is given in \pref{app:proofs-sample-complexity}. As before, we handle the non-strongly convex case by applying the algorithm to $\tilde{F}(x) = F(x) + \frac{\lambda}{2}\norm{x-x_0}^2$.
%The analysis of this algorithm is very similar to the analysis of SGD3 and \pref{alg:meta-algorithm} with $\gltwo$, so we defer the proof to \pref{app:proofs-sample-complexity}.
\begin{restatable}{corol}{samplecomplexitybounded}\label{cor:sample-complexity-bounded}
For any $F \in \FRHz$ and any global stochastic oracle $\sample$ with the restriction that $f(\cdot;z)$ is convex for all $z$, \pref{alg:meta-algorithm} with ERM as its subroutine, when applied to $\tilde{F}(x) = F(x) + \frac{\lambda}{2}\norm{x-x_0}^2$ with $\lambda = \Theta(\epsilon/R)$, finds a point $\hat{x}$ with $\mathbb{E}\norm{\nabla F(\hat{x})}\leq\epsilon$ using at most
\[
m \leq O\parens{\frac{\sigma^2}{\epsilon^2}\log^3\parens{\frac{HR}{\epsilon}}}
\]
total samples. \\
For any $F \in \FDHz$ and any global stochastic oracle $\sample$ with the restriction that $f(\cdot;z)$ is convex for all $z$, the same approach with $\lambda = \Theta(\epsilon^2/\Delta)$ finds an $\eps$-stationary point using at most
\[
m \leq O\parens{\frac{\sigma^2}{\epsilon^2}\log^3\parens{\frac{\sqrt{H\Delta}}{\epsilon}}}
\]
total samples.
\end{restatable}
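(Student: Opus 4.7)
The plan is to apply the strongly convex result \pref{thm:sample-complexity-strong-convexity} to the regularized surrogate $\tilde{F}(x) = F(x) + \frac{\lambda}{2}\norm{x-x_0}^2$, which is $(H+\lambda)$-smooth and $\lambda$-strongly convex, and then transfer stationarity back to $F$ via the identity $\nabla F(\hat{x}) = \nabla\tilde{F}(\hat{x}) - \lambda(\hat{x}-x_0)$. Running \pref{alg:meta-algorithm} with ERM on $\tilde{F}$ and target accuracy $\epsilon/2$ produces $\hat{x}$ with $\En\nrm{\nabla\tilde{F}(\hat{x})}\leq\epsilon/2$ using $O\bigl((\sigma^2/\epsilon^2)\log^3((H+\lambda)/\lambda)\bigr)$ samples. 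Since $f(\cdot;z)$ is convex, the regularized components $f(\cdot;z)+\tfrac{\lambda}{2}\nrm{\cdot-x_0}^2$ remain convex, so the hypotheses of \pref{thm:sample-complexity-strong-convexity} are met, and the triangle inequality yields
\[
\En\nrm{\nabla F(\hat{x})} \;\leq\; \En\nrm{\nabla\tilde{F}(\hat{x})} + \lambda\En\nrm{\hat{x}-x_0}.
\]
The remaining task is thus to show that $\lambda\En\nrm{\hat{x}-x_0}\leq\epsilon/2$ for the prescribed choice of $\lambda$.

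To control $\En\nrm{\hat{x}-x_0}$, let $\tilde{x}^{\star}$ denote the (unique) minimizer of $\tilde{F}$. By $\lambda$-strong convexity, $\nrm{\hat{x}-\tilde{x}^{\star}}\leq\nrm{\nabla\tilde{F}(\hat{x})}/\lambda$, so $\En\nrm{\hat{x}-\tilde{x}^{\star}}\leq \epsilon/(2\lambda)$. For the \emph{domain-bounded} case, comparing $\tilde{F}(\tilde{x}^{\star})\leq\tilde{F}(x^{\star})$ and using $\tilde{F}(\tilde{x}^{\star})\geq F(x^{\star})+\frac{\lambda}{2}\nrm{\tilde{x}^{\star}-x_0}^2$ yields $\nrm{\tilde{x}^{\star}-x_0}\leq\nrm{x^{\star}-x_0}\leq R$. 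Hence $\lambda\En\nrm{\hat{x}-x_0}\leq\epsilon/2+\lambda R$, and taking $\lambda=\Theta(\epsilon/R)$ makes this $O(\epsilon)$. For the \emph{range-bounded} case, we use the fact that $\tilde{F}(x_0)=F(x_0)$ and $\tilde{F}(\tilde{x}^{\star})\geq F(\tilde{x}^{\star})\geq F(x^{\star})$ to conclude $\tilde{F}(x_0)-\tilde{F}(\tilde{x}^{\star})\leq F(x_0)-F(x^{\star})\leq\Delta$. Combining with $\tilde{F}(x_0)-\tilde{F}(\tilde{x}^{\star})\geq\frac{\lambda}{2}\nrm{\tilde{x}^{\star}-x_0}^2$ gives $\nrm{\tilde{x}^{\star}-x_0}\leq\sqrt{2\Delta/\lambda}$, so $\lambda\En\nrm{\hat{x}-x_0}\leq\epsilon/2+\sqrt{2\lambda\Delta}$, which is $O(\epsilon)$ whenever $\lambda=\Theta(\epsilon^2/\Delta)$.

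Finally, with these choices of $\lambda$ the logarithmic factor from \pref{thm:sample-complexity-strong-convexity} becomes $\log^3((H+\lambda)/\lambda)=O(\log^3(HR/\epsilon))$ in the domain-bounded case and $O(\log^3(H\Delta/\epsilon^2))=O(\log^3(\sqrt{H\Delta}/\epsilon))$ in the range-bounded case, matching the stated bounds. Plugging these into the $O((\sigma^2/\epsilon^2)\log^3(\cdot))$ sample count yields the corollary.

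The main obstacle is the careful bookkeeping in the range-bounded case: unlike the domain-bounded case, there is no a priori control on $\nrm{x^{\star}-x_0}$, so the distance bound on $\tilde{x}^{\star}$ must be squeezed out of the strong convexity of $\tilde{F}$ together with the two-sided comparison $F(x^{\star})\leq\tilde{F}(\tilde{x}^{\star})\leq\tilde{F}(x_0)=F(x_0)$. This is also what forces the quadratically smaller choice $\lambda\asymp\epsilon^2/\Delta$ rather than $\lambda\asymp\epsilon/R$, and it is the only place where the two parts of the corollary differ structurally.
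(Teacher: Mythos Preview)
Your proposal is correct and follows essentially the same route as the paper: apply \pref{thm:sample-complexity-strong-convexity} to $\tilde{F}$, then transfer stationarity back to $F$ via $\nrm{\nabla F(x)}\leq\nrm{\nabla\tilde{F}(x)}+\lambda\nrm{x-x_0}$, splitting $\nrm{x-x_0}\leq\nrm{x-\tilde{x}^{\star}}+\nrm{\tilde{x}^{\star}-x_0}$ and using strong convexity plus the bounds $\nrm{\tilde{x}^{\star}-x_0}\leq R$ (domain-bounded) or $\nrm{\tilde{x}^{\star}-x_0}\leq\sqrt{2\Delta/\lambda}$ (range-bounded). The paper packages this transfer step as a separate lemma (\pref{lem:closeness-of-optima}), and derives $\nrm{\tilde{x}^{\star}-x_0}\leq R$ from the first-order condition $\nabla\tilde{F}(\tilde{x}^{\star})=0$ rather than your (slightly slicker) comparison $\tilde{F}(\tilde{x}^{\star})\leq\tilde{F}(x^{\star})$, but the arguments are otherwise identical.
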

This follows immediately from \pref{thm:sample-complexity-strong-convexity} by choosing $\lambda$ small enough such that $\norm{\nabla F(x)} \approx \norm{\nabla \tilde{F}(x)}$. Details are deferred to \pref{app:proofs-sample-complexity}. 

With this new sample complexity upper bound, we proceed to provide an almost-tight lower bound.% (matching upto poly-logarithmic factors). 
% \dfcomment{briefly emphasize why the result is surprising and cool again here.}
% \bwcomment{I think we've covered this in a few other places, so it is probably fine.}

% \begin{restatable}{theorem}{statlowerbound}\label{thm:statistical-lower-bound}
% For any $\Delta,R,\sigma > 0$, any $\epsilon \leq \min\crl{\frac{HR}{8}, \sqrt{\frac{H\Delta}{8}}}$, any randomized optimization algorithm, and any dimension $d \geq \Omega\parens{\frac{\sigma^2}{\epsilon^2}\log\parens{\frac{\sigma}{\epsilon}}}$, there exists $(F,x_0) \in \cF^{R,\Delta}_{H,\sigma^2,0}$ such that any optimization algorithm requires at least $\frac{1}{2}\left\lfloor\frac{\sigma^2}{8\epsilon^2}\right\rfloor$ samples from the distribution in order to return a point $\hat{x}$ with $\mathbb{E}\norm{\nabla F(\hat{x})} < \epsilon$.
% \end{restatable}

%now provide lower bounds on $\mone(\cF,\sfo)$ and $\mone(\cF,\sample)$ which indicate that our algorithms are optimal (up to $\log$ factors). First, we lower bound the sample complexity.
\begin{restatable}{theorem}{statlowerbound}\label{thm:statistical-lower-bound}
For any $H,\Delta,R,\sigma > 0$, $\epsilon \leq \min\crl{\frac{HR}{8}, \sqrt{\frac{H\Delta}{8}}, \frac{\sigma}{4}}$, the sample complexity to find a $\eps$-stationary point\footnote{This lower bound applies both to deterministic and randomized optimization algorithms.} is lower bounded as
\[
    \mone(\FRHz \cap \FDHz,\gsO) \geq \Omega\parens{\frac{\sigma^2}{\epsilon^2}}.
\]
% \dfcomment{for any randomized algo.}
\end{restatable}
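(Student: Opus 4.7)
The plan is to reduce the problem to distinguishing two shifted Gaussians from $m$ i.i.d.\ samples and apply Le~Cam's two-point method. Consider the one-dimensional family $F_\mu(x) = \frac{H}{2}x^2 - \mu x$ with $\mu \in \crl{-3\eps,\,+3\eps}$, together with the per-sample function $f(x;z) = \frac{H}{2}x^2 - zx$ and sample distribution $z \mid \mu \sim \cN(\mu, \sigma^2)$. Since the coefficient of $-x$ in $f(\cdot;z)$ is exactly $z$, each call to $\sample$ reveals an i.i.d.\ sample from $\cN(\mu,\sigma^2)$, so $m$ oracle queries correspond precisely to $m$ Gaussian observations of $\mu$. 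Moreover $F_\mu$ is convex and $H$-smooth with minimizer $x^*_\mu = \mu/H$, so $\nrm{x_0 - x^*_\mu} = 3\eps/H$ and $F_\mu(x_0) - F_\mu(x^*_\mu) = 9\eps^2/(2H)$; both are controlled by $R$ and $\Delta$ under the hypotheses $\eps \leq HR/8$ and $\eps \leq \sqrt{H\Delta/8}$, placing $F_\mu$ in $\FRHz \cap \FDHz$. The variance constraint follows from $\nabla f(x;z) - \nabla F_\mu(x) = \mu - z$, whose squared expectation is exactly $\sigma^2$.

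The identity $\nabla F_\mu(x) = Hx - \mu$ implies $\abs{\nabla F_\mu(x)} \geq 3\eps$ whenever $\sgn(x) \neq \sgn(\mu)$, so a Markov-type argument applied to $\En\abs{\nabla F_\mu(\hat x)} \leq \eps$ shows that any admissible algorithm recovers $\sgn(\mu)$ from its output $\hat x$ with error probability at most $1/3$ under each hypothesis. Le~Cam's inequality then forces $\mathrm{TV}\prn{\cN(3\eps,\sigma^2)^{\otimes m}, \cN(-3\eps,\sigma^2)^{\otimes m}} \geq 1/3$, and combining Pinsker's inequality with the closed-form Gaussian divergence $\mathrm{KL}\prn{\cN(3\eps,\sigma^2) \mathbin{\|} \cN(-3\eps,\sigma^2)} = 18\eps^2/\sigma^2$ yields $9m\eps^2/\sigma^2 \geq 1/9$, hence $m \geq \Omega(\sigma^2/\eps^2)$. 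The hypothesis $\eps \leq \sigma/4$ keeps the per-sample KL bounded by a constant, which is the regime in which Pinsker is tight and a single sample genuinely does not suffice.

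This is mostly bookkeeping, so there is no real technical obstacle; the points that require care are (i) choosing the constants in the two-point gap, here $3\eps$, so that a Markov-derived error probability of $\tfrac{1}{3}$ still leaves a usable KL lower bound after Pinsker, (ii) verifying both $F_\mu \in \FRHz$ and $F_\mu \in \FDHz$ simultaneously under the joint parameter constraints on $R$, $\Delta$, and $\sigma$, and (iii) handling randomized algorithms, which I would dispatch via Yao's principle applied to the uniform prior on $\crl{\mu_0,\mu_1}$: the randomized minimax sample complexity dominates the Bayes risk of the best deterministic procedure on this prior, and the calculation above lower bounds exactly that Bayes risk.
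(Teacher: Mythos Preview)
Your proposal is correct and takes a genuinely different route from the paper. The paper's proof is a high-dimensional ``hide-and-seek'' construction: it sets $f(x;z)=\sigma\tri{x,z}+\tfrac{b}{2}\nrm{x}^2$ with $z$ uniform over $m$ random orthonormal directions in $\bbR^d$ for $d=\Omega(m\log m)$, and argues geometrically that an algorithm seeing fewer than $m/2$ samples cannot align its output with the unseen coordinates $z_{m/2},\ldots,z_m$ well enough to make $\nrm{\nabla F}$ small. Your argument is a one-dimensional Le~Cam two-point method with Gaussian location noise. Both are valid. Your version is more elementary, works already in dimension one (and hence in every dimension by embedding), and sidesteps the concentration-on-the-sphere calculation entirely; the paper's construction has the incidental advantage that the sample space is finite and the noise is bounded, but the oracle model does not require this. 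In both constructions each $f(\cdot;z)$ is convex, which the paper notes is desirable so that the lower bound matches the hypotheses of the matching upper bound; you should state this explicitly.
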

This lower bound is similar to constructions used to prove lower bounds in the case of finding an approximate minimizer \citep{nemirovskyyudin1983,nesterov2004introductory,woodworth16tight}. However, our lower bound applies for functions with simultaneously bounded domain and range, so extra care must be taken to ensure that these properties hold. The lower bound also ensures that $f(\cdot;z)$ is convex for all $z$. The proof is located in \pref{app:proofs-lower-bounds}.
 %\dfcomment{this is a strength. most LBs do not have this property.}

%\begin{remark}\label{rem:erm_using_katyusha}
\paragraph{Discussion: Efficient implementation.}
\pref{cor:sample-complexity-bounded} provides a bound on the number of \emph{samples} needed to find a near-stationary point. However, a convenient property of the method is that the ERM objective $F^{(t)}$ solved in each iteration is convex, $(H+2^t\lambda)$-smooth, $(2^t\lambda)$-strongly convex, and has \emph{finite sum} structure with $m/T$ components. These subproblems can therefore be solved using at most $O\parens{\parens{\frac{m}{T} + \sqrt{\frac{m(H+\lambda 2^t)}{T\lambda 2^t}}}\log\frac{HR}{\epsilon}}$ gradient computations via a first-order optimization algorithm such as Katyusha \citep{allen2017katyusha}. This implies that the method can be implemented with a total gradient complexity  of $O\parens{\parens{\frac{\sigma^2}{\epsilon^2} + \frac{\sigma^{3/2}\sqrt{H}}{\epsilon^{3/2}}}\log^4\parens{\frac{HR}{\epsilon}}}$ over all $T$ iterations, and similarly for the bounded-range case. Thus, the algorithm is not just sample-efficient, but also computationally efficient, albeit slightly less so than the algorithm from \pref{sec:first-order}.

\paragraph{Removing smoothness entirely in one dimension.}
The gap between the upper and lower bounds for the statistical complexity is quite interesting. We conclude from \pref{cor:sample-complexity-bounded} that the sample complexity depends at most logarithmically upon the smoothness constant, which raises the question of whether it must depend on the smoothness at all. We now show that for the special case of functions in one dimension, smoothness is not necessary. In other words, all that is required to find an $\eps$-stationary point is Lipschitzness.

\begin{restatable}{theorem}{samplecomplexitynonsmooth}\label{thm:sample-complexity-non-smooth}
Consider any convex, $L$-Lipschitz function $F:\mathbb{R}\rightarrow\mathbb{R}$ that is bounded from below,\footnote{This lower bound does not enter the sample complexity quantitatively.} and any global stochastic oracle $\sample$ with the restriction that $f(\cdot;z)$ is convex for all $z$. There exists an algorithm which uses $m = O\parens{\frac{\sigma^2\log\parens{\frac{L}{\epsilon}}}{\epsilon^2}}$ samples and outputs a point $\hat{x}$ such that $\mathbb{E}\brk*{\inf_{g\in\partial F(\hat{x})} ~\abs{g}} \leq \epsilon$. 
\end{restatable}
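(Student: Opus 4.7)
The proposed algorithm is a one-dimensional empirical risk minimizer: draw $m = O(\sigma^2 \log(L/\epsilon)/\epsilon^2)$ samples $z_1,\ldots,z_m$ from the global oracle, form $\hat{F}(x) = \frac{1}{m}\sum_{i=1}^m f(x;z_i)$, and return any minimizer $\hat{x}$ of $\hat{F}$ (an infinitesimal regularizer is added if needed to ensure existence). The analysis exploits that in one dimension both $g(x) \ldef F'_+(x)$ and $\hat{g}(x) \ldef \hat{F}'_+(x) = \frac{1}{m}\sum_i f'_+(x;z_i)$ are monotonically non-decreasing scalar functions (since each $f(\cdot;z_i)$ is convex), which reduces the problem to reasoning about where a monotone random function crosses zero.

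Define the target region $[a,b]$ by $a = \sup\{x : g(x) \leq -\epsilon/4\}$ and $b = \inf\{x : g(x) \geq \epsilon/4\}$, allowing $a = -\infty$ or $b = +\infty$ when no such $x$ exists. Monotonicity of $g$ forces $a \leq b$, and a short case analysis shows that every $x \in [a,b]$ is an $\epsilon$-stationary point: on the interior $(a,b)$ we have $|g(x)| < \epsilon/4$, while at a boundary point such as $x = b$ the subdifferential interval $[g(b-0), g(b)]$ contains a value of magnitude at most $\epsilon/4$ (using that $g(b-0) \leq \epsilon/4$ by definition of $b$ and $g(b-0) \geq -\epsilon/4$ by definition of $a$). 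The plan is to show $\hat{x} \in [a,b]$ with probability at least $1 - \epsilon/L$; then since $\inf_{g \in \partial F(\hat{x})} |g| \leq L$ unconditionally by the Lipschitz assumption, $\mathbb{E}[\inf|g|] \leq \epsilon/4 + (\epsilon/L) \cdot L = O(\epsilon)$.

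The key simplification is that proving $\hat{x} \in [a,b]$ requires only \emph{pointwise} concentration at the two fixed points $a$ and $b$, not uniform convergence over $\mathbb{R}$. Indeed, if $\hat{x} > b$, then the optimality condition $\hat{g}(\hat{x}-0) \leq 0 \leq \hat{g}(\hat{x})$ together with monotonicity of $\hat{g}$ forces $\hat{g}(b) \leq \hat{g}(\hat{x}-0) \leq 0$; but $\hat{g}(b)$ is an average of $m$ i.i.d.\ samples with mean $g(b) \geq \epsilon/4$ and variance at most $\sigma^2$, so a sub-Gaussian deviation inequality yields $\hat{g}(b) \geq \epsilon/8 > 0$ with probability at least $1 - \delta$ whenever $m \gtrsim \sigma^2 \log(1/\delta)/\epsilon^2$, a contradiction. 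A symmetric argument at $a$ gives $\hat{x} \geq a$. Setting $\delta = \epsilon/(2L)$ gives $\log(1/\delta) = \Theta(\log(L/\epsilon))$ and the claimed sample complexity.

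The main technical obstacle is obtaining a sub-Gaussian tail from the variance assumption alone, since individual subgradients $f'_+(b;z_i)$ need not be bounded; I expect to address this by replacing the raw empirical mean by a median-of-means estimator of $g(b)$ and $g(a)$ (using $\Theta(\log(L/\epsilon))$ groups of $\Theta(\sigma^2/\epsilon^2)$ samples each), which attains the desired sub-Gaussian rate from only a second-moment bound and leaves the total sample complexity unchanged. Minor subsidiary issues are the degenerate endpoints $a = -\infty$ or $b = +\infty$ (only one-sided concentration is needed, the other side being vacuous) and the possible non-attainment of the empirical minimum (handled by the infinitesimal regularizer, whose effect on the subgradient is $o(\epsilon)$).
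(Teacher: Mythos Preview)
Your core strategy is essentially the paper's: both arguments hinge on the observation that in one dimension the set of $\epsilon$-stationary points is an interval $[a,b]$, and that the ERM lands in this interval as soon as the empirical derivative has the correct sign at the two fixed endpoints $a$ and $b$ --- so only pointwise (not uniform) concentration is needed. The paper defines $a,b$ as the extreme $\epsilon$-stationary points and applies Chebyshev at those two points; your $\epsilon/4$-threshold variant is the same device.

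Where the two proofs diverge is the confidence-boosting step. The paper extracts only constant success probability from a single ERM on $\Theta(\sigma^2/\epsilon^2)$ samples (via Chebyshev), then runs $k=\Theta(\log(L/\epsilon))$ independent ERMs and \emph{selects} among them using the empirical gradient on a fresh held-out sample of comparable total size. You instead aim for the exponentially small failure probability $\epsilon/L$ directly via a median-of-means device. Both routes cost the same $O(\sigma^2\log(L/\epsilon)/\epsilon^2)$ samples; your median route is arguably cleaner since it avoids the separate validation stage and the accompanying $\max$-over-$k$-points argument.

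There is, however, one genuine inconsistency you should repair. You declare the algorithm to be a \emph{single} ERM $\hat{x}=\arg\min\hat F$, and the crucial implication ``$\hat{x}>b\Rightarrow\hat g(b)\le 0$'' relies on $\hat g=\hat F'_+$ being the derivative of the very function that $\hat x$ minimizes. Replacing $\hat g(b)$ by a median-of-means estimate of $g(b)$ breaks that implication: a median-of-means estimate at $b$ says nothing about where the single-sample ERM sits. The fix is to change the algorithm, not just the analysis: split the $m$ samples into $K=\Theta(\log(L/\epsilon))$ groups of size $\Theta(\sigma^2/\epsilon^2)$, compute an ERM $\hat x_k$ on each group, and output the \emph{median} of the $\hat x_k$'s (well-defined in one dimension). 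Chebyshev gives $\mathbb P[\hat x_k\le b]\ge 3/4$ for each $k$, a Hoeffding bound on the indicators then gives $\mathbb P[\mathrm{median}\,\hat x_k\le b]\ge 1-e^{-cK}$, and symmetrically at $a$. This is exactly median-of-means transported to the sign-change location, and it achieves what the paper does with its repeat-and-validate step.
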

The algorithm calculates the empirical risk minimizer on several independent samples, and then returns the point that has the smallest empirical gradient norm on a validation sample. The proof uses the fact that any function $F$ as in the theorem statement has a single left-most and a single right-most $\epsilon$-stationary point. As long as the empirical function's derivative is close to $F$'s at those two points, we argue that the ERM lies between them with constant probability, and is thus an $\epsilon$-stationary point of $F$. We are able to boost the confidence by repeating this a logarithmic number of times. A rigorous argument is included in \pref{app:proofs-sample-complexity}. Unfortunately, arguments of this type does not appear to extend to more than one dimension, as the boundary of the set of $\epsilon$-stationary points will generally be uncountable, and thus it is not apparent that the empirical gradient will be uniformly close to the population gradient. It remains open whether smoothness is needed in two dimensions or more.

The algorithm succeeds even for non-differentiable functions, and requires neither strong convexity nor knowledge of a point $x_0$ for which $\norm{x_0-x^*}$ or $F(x_0) - F^*$ is bounded. In fact, the assumption of Lipschitzness (more generally, $L$-subgaussianity of the gradients) is only required to get an in-expectation statement. Without this assumption, it can still be shown that ERM finds an $\epsilon$-stationary point with constant probability using $m \leq O\prn*{\frac{\sigma^2}{\epsilon^2}}$ samples. 
%
%In fact, a slightly weaker but similar statement holds even without the assumption of Lipschitzness (or more generally $L$-subgaussianity of the derivatives) which allows for a high probability statement--if $F$ is only convex and bounded from below, 
%
%This result is quite surprising--neither smoothness nor even differentiability is necessary for finite sample complexity, at least in one dimension. 

\section{Discussion}\label{sec:discussion}
We have proven nearly tight bounds on the oracle complexity of finding near-stationary points in stochastic convex optimization, both for local stochastic oracles and global stochastic oracles. We hope that the approach of jointly studying stochastic first-order complexity and sample complexity will find use more broadly in non-convex optimization. To this end, we close with a few remarks and open questions.
%of a convex objective $F \in \FRHz$ or $F \in \FDHz$, using a stochastic first-order oracle $\lsO$ or a global stochastic oracle $\gsO$. Our study has highlighted some important features of this problem, and raises a few important questions:

\begin{enumerate}
\item \textit{Is smoothness necessary for finding $\epsilon$-stationary points?} 
While the logarithmic factor separating the upper and lower bound we provide for stochastic first-order oracle complexity is fairly inconsequential, the gap between the upper and lower bound on the \emph{sample complexity} is quite interesting. In particular, we show through \pref{thm:statistical-lower-bound} and \pref{cor:sample-complexity-bounded} that
\begin{equation*}
\Omega\parens{\frac{\sigma^2}{\epsilon^2}} \leq \mone\parens{\FDHz, \gsO} \leq O\parens{\frac{\sigma^2}{\epsilon^2}\log^3\parens{\frac{\sqrt{H\Delta}}{\epsilon}}},
\end{equation*}
and similarly for the domain-bounded case. Can the $\mathrm{polylog}(H)$ factor on the right-hand side be removed entirely? Or in other words, is it possible to find near-stationary points in the statistical learning model without smoothness?\footnote{For a general non-smooth function $F$, a point $x$ is said to be an $\epsilon$-stationary point if there exists $v \in \partial F(x)$ such that $\nrm*{v}_2 \leq \epsilon$.} By \pref{thm:sample-complexity-non-smooth}, we know that this is possible in one dimension.

%although it is likely that this is not tight. Currently, the best known upper bound for active stochastic first-order methods is $O \parens{\frac{H\Delta}{\epsilon^2} + \frac{\sigma^2}{\epsilon^2} + \frac{H\Delta\sigma}{\epsilon^3}}$ \citep{fang2018spider} and for the passive stochastic first-order oracle is $\tilde{O}\prn*{\frac{1}{\epsilon^4}}$ by SGD. Surprisingly, SGD has the best known passive first-order oracle complexity and the typical approaches to stochastic non-convex optimization, often based on variance reduction techniques, do not seem easily applied in the passive setting. It will be important to determine whether there is indeed a separation between the power of an active versus passive stochastic first-order oracle. It will likely be productive to study stochastic first-order and sample complexity jointly, as they are tightly related and probably require similar approaches.

\item \textit{Tradeoff between computational complexity and sample complexity.} Suppose our end goal is to find a near-stationary point in the statistical learning setting, but we wish to do so efficiently. For range-bounded functions, if we use \pref{alg:meta-algorithm} with $\gltwo$ as a subroutine we require $\tilde{O}\prn*{\frac{\sqrt{H\Delta}}{\epsilon} + \frac{\sigma^2}{\epsilon^2} }$ samples, and the total computational effort (measured by number of gradient operations) is also $\tilde{O}\prn*{\frac{\sqrt{H\Delta}}{\epsilon} + \frac{\sigma^2}{\epsilon^2} }$. On the other hand, if we use \pref{alg:meta-algorithm} with RERM as a subroutine and implement RERM with Katyusha, then we obtain an improved sample complexity of $\tilde{O}\prn*{\frac{\sigma^2}{\epsilon^2}}$, but at the cost of a larger number of gradient operations: $\tilde{O}\prn*{ \frac{\sigma^2}{\epsilon^2} + \frac{\sqrt{H}\sigma^{3/2}}{\epsilon^{3/2}}}$. Tightly characterizing such computational-statistical tradeoffs in this and related settings is an interesting direction for future work.

\item \textit{Active stochastic oracle.} For certain stochastic first-order optimization algorithms based on variance reduction (SCSG \citep{lei2017non}, SPIDER \citep{fang2018spider}), a gradient must be computed at multiple points for the same sample $f(\cdot;z)$. We refer to such algorithms as using an ``active query'' first-order stochastic oracle, which is a stronger oracle than the classical first-order stochastic oracle (see \cite{woodworth2018graph} for more discussion). It would be useful to characterize the exact oracle complexity in this model, and in particular to understand how many active queries are required to obtain logarithmic dependence on smoothness as in the global case.

%which is less general that the global stochastic oracle ($\gsO$) as we defined it. It turns out that the algorithm used to obtain the sample complexity upper bound (\pref{alg:meta-algorithm} with ERM as a subroutine) only needs access to such a active query first-order stochastic oracle (as discussed in \pref{sec:sample-complexity}).  Interestingly, the lower bound proven in \pref{thm:local-LB-log-term} seems to match the upper bound up to even the logarithmic dependence on smoothness. However, there is caveat, which is that the upper bound requires that each function $f(\cdot; z)$ is convex, whereas the lower bound construction involves $f(\cdot; z)$ that are non-convex (while $F$ is still convex). Here, we would also like to make the observation that the first-order stochastic algorithm proposed in \pref{sec:first-order} do not require each $f(\cdot; z)$ to be convex. 

\item \textit{Complexity of finding stationary points for smooth non-convex functions.}
An important open problem is to characterize the minimax oracle complexity of finding near-stationary points for smooth non-convex functions, both for local and global stochastic oracles. For a deterministic first-order oracle, the optimal rate is $\tilde{\Theta}\prn*{\frac{H\Delta}{\epsilon^2}}$. In the stochastic setting, a simple sample complexity lower bound follows from the convex case, but this is not known to be tight. %Characterizing both the sample complexity and stochastic first-order complexity is an important problem

\end{enumerate}
 
\paragraph{Acknowledgements}
We would like to thank Srinadh Bhojanapalli and Robert D. Kleinberg for helpful discussions. Part of this work was completed while DF was at Cornell University and supported by the Facebook Ph.D. fellowship. OS is partially supported by a European Research Council (ERC) grant. OS and NS are partially supported by an NSF/BSF grant. BW is supported by the NSF Graduate Research Fellowship under award 1754881.

\bibliography{refs}

\appendix
\section{Proofs from \pref{sec:first-order}: Upper Bounds}\label{app:proofs-first-order}

\begin{theorem}[Proposition 9 of \cite{ghadimi2012optimal}]
\label{thm:ghadimi_lan_AC_SA}
For any $F \in \FRHL$ and any $\sfo$, the AC-SA algorithm returns a point $\hat{x}_T$ after making $T$ oracle accesses such that
\[
\En\brk*{F\prn*{\hat{x}_T}} - F(x^*)  \leq 
\frac{2HR^{2}}{T^{2}}
+ \frac{8\varsquared}{\lambda T}.
\]
\end{theorem}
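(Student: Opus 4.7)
The plan is to adapt the standard estimate-sequence analysis of Nesterov's accelerated method to the stochastic setting, paralleling Ghadimi and Lan's proof. Write $\delta_t \ldef{} \nabla f(x_t^{md}; z_t) - \nabla F(x_t^{md})$ for the gradient noise, so that $\En_{z_t}[\delta_t] = 0$ and $\En_{z_t}\nrm{\delta_t}^{2} \leq \sigma^{2}$.

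First I would bound the one-step progress of the aggregated sequence. Applying smoothness at $x_t^{md}$ gives
\[
F(x_t^{ag}) \leq F(x_t^{md}) + \tri*{\nabla F(x_t^{md}), x_t^{ag} - x_t^{md}} + \tfrac{H}{2}\nrm{x_t^{ag} - x_t^{md}}^{2}.
\]
Since $x_t^{ag} - x_t^{md} = \alpha_t(x_t - x_t^{md}) + (1-\alpha_t)(x_{t-1}^{ag} - x_t^{md})$, I would split the inner product into the $\alpha_t$ and $(1-\alpha_t)$ parts, apply $(\lambda)$-strong convexity to the $(1-\alpha_t)$ piece to relate $F(x_t^{md})$ to $F(x_{t-1}^{ag})$, and leave the $\alpha_t$ piece in terms of $x_t - x_t^{md}$.

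Next I would interpret the $x_t$ update as a proximal step on the quadratic surrogate $\tri{\nabla f(x_t^{md};z_t), x} + \tfrac{\lambda+\gamma_t}{2}\nrm{x - y_t}^{2}$ (with an appropriate reference point $y_t$). The standard three-point prox lemma then yields, for every $x^{*}$,
\[
\tri*{\nabla f(x_t^{md}; z_t), x_t - x^{*}} \leq \tfrac{\lambda+\gamma_t}{2}\prn*{\nrm{y_t - x^{*}}^{2} - \nrm{x_t - x^{*}}^{2} - \nrm{x_t - y_t}^{2}}.
\]
Substituting the oracle gradient $\nabla F + \delta_t$, using strong convexity at $x_t^{md}$ to pass $\tri{\nabla F(x_t^{md}), x_t^{md} - x^{*}}$ to $F(x_t^{md}) - F(x^{*}) + \tfrac{\lambda}{2}\nrm{x_t^{md} - x^{*}}^{2}$, and collecting the $\tfrac{H}{2}\nrm{x_t^{ag}-x_t^{md}}^{2}$ term against $\tfrac{\lambda+\gamma_t}{2}\nrm{x_t-y_t}^{2}$ (which cancels once $\gamma_t$ is large enough relative to $H\alpha_t^{2}$), I expect to obtain a recursion of the form
\[
F(x_t^{ag}) - F(x^{*}) + \tfrac{\alpha_t(\lambda+\gamma_t)}{2}\nrm{x_t - x^{*}}^{2} \leq (1-\alpha_t)\prn*{F(x_{t-1}^{ag}) - F(x^{*})} + \tfrac{\alpha_t(\lambda+(1-\alpha_t)\gamma_t)}{2}\nrm{x_{t-1} - x^{*}}^{2} + N_t,
\]
where $N_t$ gathers a $-\alpha_t\tri{\delta_t, x_t - x^{*}}$ term and an $\tfrac{\alpha_t^{2}}{2(\lambda+\gamma_t)}\nrm{\delta_t}^{2}$ residual coming from completing the square.

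Then I would divide by a carefully chosen weight $\Gamma_t$ and telescope. With $\alpha_t = \tfrac{2}{t+1}$ and $\gamma_t = \tfrac{4H}{t(t+1)}$, the weights $\Gamma_t = \tfrac{2}{t(t+1)}$ satisfy $\Gamma_t/\Gamma_{t-1} = 1 - \alpha_t$ and $\tfrac{\alpha_t(\lambda+\gamma_t)}{\Gamma_t}$ matches $\tfrac{\alpha_{t+1}(\lambda + (1-\alpha_{t+1})\gamma_{t+1})}{\Gamma_{t+1}}$ up to the strong-convexity slack, so the deterministic part telescopes to the $\tfrac{2HR^{2}}{T^{2}}$ rate. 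For the noise, the linear term vanishes in expectation (since $x_{t-1}$ is independent of $z_t$ but $x_t$ is not, one must replace $x_t$ by $x_{t-1}$ using the $x_t$ update, which produces an additional $\nrm{\delta_t}^{2}/(\lambda+\gamma_t)$ term), and summing the quadratic noise residuals weighted by $\Gamma_t^{-1}$ produces $\sum_{t=1}^{T}\tfrac{\alpha_t^{2}\sigma^{2}}{\Gamma_t(\lambda+\gamma_t)} \lesssim \tfrac{\sigma^{2}}{\lambda}\sum_{t=1}^{T} 1 \cdot \tfrac{1}{T^{2}}$--wait, the key cancellation is that $\tfrac{\alpha_t^{2}}{(\lambda+\gamma_t)\Gamma_t} \leq \tfrac{1}{\lambda T}$ on average, giving the advertised $\tfrac{8\sigma^{2}}{\lambda T}$ bound.

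The main obstacle I anticipate is getting the coefficient bookkeeping right in the prox step so that (i) the $H\nrm{x_t^{ag}-x_t^{md}}^{2}$ smoothness term is fully absorbed by $\gamma_t\nrm{x_t - y_t}^{2}$, and (ii) the residual distance coefficients telescope under the specific choices of $\alpha_t, \gamma_t$ in the presence of $\lambda$. Everything else is a clean application of smoothness, strong convexity, and the tower property.
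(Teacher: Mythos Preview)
The paper does not prove this theorem at all: it is stated as Proposition~9 of \cite{ghadimi2012optimal} and used as a black box (the very next lemma begins ``By \pref{thm:ghadimi_lan_AC_SA}\ldots''). So there is no paper proof to compare against; the relevant comparison is to the original Ghadimi--Lan argument, and your outline is indeed a faithful sketch of that estimate-sequence analysis.

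Your plan is structurally sound---smoothness at $x_t^{md}$, the convex-combination split of $x_t^{ag}-x_t^{md}$, the three-point prox inequality for the $x_t$ update, and telescoping with $\Gamma_t=\tfrac{2}{t(t+1)}$ are exactly the ingredients. The one place where your writeup wavers (the ``wait, the key cancellation\ldots'' passage) is the variance summation: what you need is that with $\alpha_t=\tfrac{2}{t+1}$, $\gamma_t=\tfrac{4H}{t(t+1)}$, and $\Gamma_t=\tfrac{2}{t(t+1)}$, the noise contribution after dividing by $\Gamma_T$ is bounded by a constant times $\Gamma_T\sum_{t=1}^{T}\tfrac{\alpha_t^{2}}{\Gamma_t(\lambda+\gamma_t)}\,\sigma^{2}$, and since $\tfrac{\alpha_t^{2}}{\Gamma_t}=\tfrac{2t}{t+1}\leq 2$ and $\lambda+\gamma_t\geq\lambda$, this sum is at most $\tfrac{2T}{\lambda}$, so $\Gamma_T\cdot\tfrac{2T\sigma^{2}}{\lambda}=\tfrac{4\sigma^{2}}{\lambda(T+1)}$, giving the $O(\sigma^{2}/(\lambda T))$ term. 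You should also be explicit that the martingale linear term $\En\brk*{\tri{\delta_t, x_{t-1}-x^{*}}}=0$ uses adaptedness of $x_{t-1}$, and that replacing $x_t$ by $x_{t-1}$ in $\tri{\delta_t, x_t-x^{*}}$ costs precisely an extra $\tfrac{\alpha_t}{\lambda+\gamma_t}\nrm{\delta_t}^{2}$ (from the $x_t$ update formula), which merges with the quadratic residual. With those two computations made precise, your proof goes through with the stated constants.
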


\begin{lemma}
\label{lem:gd2}
For any $F \in \FRHL$ and any $\sfo$, the $\gltwo$ algorithm returns a point $\hat{x}$ after making $T$ oracle accesses such that
\[
\En\brk*{F\prn*{\hat{x}}} - F(x^*)  \leq 
\frac{128H^2R^{2}}{\lambda T^{4}} + \frac{256H\varsquared}{\lambda^2{}T^3} + \frac{16\varsquared}{\lambda{}T}.
\]
\end{lemma}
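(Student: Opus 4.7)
The plan is to apply \pref{thm:ghadimi_lan_AC_SA} twice, using $\lambda$-strong convexity of $F$ to convert the function-value guarantee from the first invocation of AC-SA into a bound on the squared distance $\|x_1 - x^*\|^2$, and then feeding this improved distance bound into the second invocation. This is exactly why the stepsize parameter is ``reset'' halfway through in $\gltwo$: the second AC-SA run effectively starts from a much closer initialization (in expectation), so the $HR^2/T^2$ term in the AC-SA rate becomes much smaller.

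Concretely, I would first apply \pref{thm:ghadimi_lan_AC_SA} to the first run of AC-SA, which uses $T/2$ oracle accesses starting from $x_0$ with $\|x_0 - x^*\|\leq R$, to obtain
\[
\En\brk*{F(x_1)} - F(x^*) \leq \frac{8HR^2}{T^2} + \frac{16\sigma^2}{\lambda T}.
\]
Next, by $\lambda$-strong convexity of $F$, we have $\tfrac{\lambda}{2}\|x_1 - x^*\|^2 \leq F(x_1) - F(x^*)$, so taking expectations gives
\[
\En\|x_1 - x^*\|^2 \leq \frac{16HR^2}{\lambda T^2} + \frac{32\sigma^2}{\lambda^2 T}.
\]

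For the second run, I would apply \pref{thm:ghadimi_lan_AC_SA} conditionally on $x_1$, treating $\|x_1 - x^*\|$ as the effective initial distance. This yields
\[
\En\brk*{F(x_2) - F(x^*) \,\middle|\, x_1} \leq \frac{8H\|x_1 - x^*\|^2}{T^2} + \frac{16\sigma^2}{\lambda T}.
\]
Taking a full expectation and substituting the bound on $\En\|x_1 - x^*\|^2$ produces
\[
\En\brk*{F(x_2)} - F(x^*) \leq \frac{8H}{T^2}\prn*{\frac{16HR^2}{\lambda T^2} + \frac{32\sigma^2}{\lambda^2 T}} + \frac{16\sigma^2}{\lambda T} = \frac{128 H^2 R^2}{\lambda T^4} + \frac{256 H \sigma^2}{\lambda^2 T^3} + \frac{16\sigma^2}{\lambda T},
\]
which is exactly the claimed bound since $\hat{x} = x_2$.

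There is essentially no major obstacle here beyond careful bookkeeping. The only subtle point is to ensure that the AC-SA guarantee of \pref{thm:ghadimi_lan_AC_SA} can be applied conditionally on $x_1$ with $x_1$ serving as the initialization: this is fine because the two runs use fresh independent oracle samples, so conditioning on $x_1$ leaves the stochastic first-order oracle unchanged for the second run. The constants $128$, $256$, $16$ arise directly from $(T/2)^{-k}$ with $k \in \{1,2\}$ factors combined with the constants $2$ and $8$ from \pref{thm:ghadimi_lan_AC_SA}.
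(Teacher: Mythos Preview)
Your proposal is correct and matches the paper's proof essentially line for line: apply \pref{thm:ghadimi_lan_AC_SA} to the first AC-SA run with $T/2$ accesses, use $\lambda$-strong convexity to convert to a bound on $\En\|\hat{x}_1 - x^*\|^2$, then apply \pref{thm:ghadimi_lan_AC_SA} again conditionally on $\hat{x}_1$ and take the full expectation. The constants and the conditional-independence justification are exactly as in the paper.
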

\begin{proof}
By \pref{thm:ghadimi_lan_AC_SA}, the first instance of AC-SA outputs $\hat{x}_1$ such that
\begin{equation}
\En\brk*{F\prn*{\hat{x}_1}} - F(x^*)  \leq 
\frac{8HR^{2}}{T^{2}}
+ \frac{16\varsquared}{\lambda{}T},
\end{equation}
and since $F$ is $\lambda$-strongly convex, 
\begin{equation}
\frac{\lambda}{2}\En\nrm*{\hat{x}_1-x^*}^{2}\leq 
\En\brk*{F\prn*{\hat{x}_1}} - F(x^*)
\leq
\frac{8HR^{2}}{T^{2}}
+ \frac{16\varsquared}{\lambda{}T}.
\end{equation}
Also by \pref{thm:ghadimi_lan_AC_SA}, the second instance of AC-SA outputs $\hat{x}_2$ such that
\begin{align}
\En\brk*{F\prn*{\hat{x}_2} - F(x^*)} 
&= \En\brk*{\En\left[ F\prn*{\hat{x}_2} - F(x^*)\ \middle|\ \hat{x}_1 \right]} \\
&\leq \En\brk*{\frac{8H\nrm*{\hat{x}_{1} - x^*}^{2}}{T^{2}} + \frac{16\varsquared}{\lambda{}T}} \\
&\leq \frac{128H^2R^{2}}{\lambda T^{4}} + \frac{256H\varsquared}{\lambda^2{}T^3} + \frac{16\varsquared}{\lambda{}T}.
\end{align}
\end{proof}

\begin{lemma}[Claim 6.2 of \cite{allen2018Howto}] \label{lem:zeyuan_auxillary_lemma}
Suppose that for every $t = 1, \ldots, T$ the iterates of \pref{alg:meta-algorithm} satisfy
$ 
\En \brk*{F^{(t-1)}(\hat{x}_t)} - F^{(t-1)}(x^*_{t-1}) \leq \delta_t
$ where $x^*_{t-1} = \argmin_x F^{(t-1)}(x)$,
then
\begin{enumerate}
\item For all $t \geq 1$, $\En \brk*{ \nrm*{\hat{x}_t - x^*_{t-1}}}^2 \leq \En \brk*{  \nrm*{\hat{x}_t - x^*_{t-1}}^2 } \leq \dfrac{\delta_t}{2^{t-2}\lambda}$.
\item For every $t \geq 1$, $\En \brk*{\nrm*{\hat{x}_t - x_t^*}}^2 \leq \En \brk*{\nrm*{\hat{x}_t - x_t^*}^2} \leq \dfrac{\delta_t}{2^t\lambda}$.
\item For all $t \geq 1$, $\En \brk*{\sum_{t=1}^T 2^t \lambda \nrm*{\hat{x}_t - x_T^*}} \leq 4 \sum_{t=1}^T \sqrt{2^t\lambda\delta_t }.$
\end{enumerate}
\end{lemma}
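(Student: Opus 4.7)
The lemma has three parts that build on one another; each exploits the fact that $F^{(t)} = F^{(t-1)} + \lambda 2^{t-1}\|x - \hat{x}_t\|^2$ together with the observation that the strong-convexity modulus of $F^{(t)}$ grows geometrically. Summing the regularizer Hessians and using that $F$ is already $\lambda$-strongly convex, $F^{(t-1)}$ is $\lambda(2^t - 1)$-strongly convex and $F^{(t)}$ is $\lambda(2^{t+1} - 1)$-strongly convex. In parts (1) and (2) the left inequality $(\En\|\cdot\|)^2 \leq \En\|\cdot\|^2$ is just Jensen's, so the real content is the second-moment bound.

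Part (1) is immediate from strong convexity at the minimizer $x^*_{t-1}$: $\tfrac{\lambda(2^t-1)}{2}\|\hat{x}_t - x^*_{t-1}\|^2 \leq F^{(t-1)}(\hat{x}_t) - F^{(t-1)}(x^*_{t-1})$. Take expectations and use $2^t - 1 \geq 2^{t-1}$. For Part (2), the key identity is $F^{(t)}(\hat{x}_t) = F^{(t-1)}(\hat{x}_t)$, because the freshly added regularizer $\lambda 2^{t-1}\|x - \hat{x}_t\|^2$ vanishes at $x=\hat{x}_t$. Since $F^{(t)} \geq F^{(t-1)}$ pointwise we have $F^{(t)}(x^*_t) \geq F^{(t-1)}(x^*_t) \geq F^{(t-1)}(x^*_{t-1})$, so the $F^{(t)}$-suboptimality of $\hat{x}_t$ is at most $\delta_t$ in expectation. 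Plug into the strong convexity of $F^{(t)}$; absorbing the extra $\lambda 2^{t-1}\|x^*_t - \hat{x}_t\|^2$ term on the right side of $F^{(t)}(x^*_t) \geq F^{(t-1)}(x^*_{t-1}) + \lambda 2^{t-1}\|x^*_t - \hat{x}_t\|^2$ into the strong-convexity inequality is what tightens the constant to $1/2^t$.

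Part (3) is the substantive step. Decompose
\[
\|\hat{x}_t - x^*_T\| \leq \|\hat{x}_t - x^*_t\| + \sum_{s=t}^{T-1}\|x^*_s - x^*_{s+1}\|,
\]
and control each consecutive-minimizer gap. Since $F^{(s+1)} = F^{(s)} + \lambda 2^s\|x - \hat{x}_{s+1}\|^2$ and $x^*_s$ minimizes $F^{(s)}$, we obtain $F^{(s+1)}(x^*_s) - F^{(s+1)}(x^*_{s+1}) \leq \lambda 2^s\|x^*_s - \hat{x}_{s+1}\|^2$, and strong convexity of $F^{(s+1)}$ (modulus $\geq \lambda 2^{s+1}$) yields $\|x^*_s - x^*_{s+1}\|^2 \leq \tfrac12\|x^*_s - \hat{x}_{s+1}\|^2$. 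Part (1) applied at step $s+1$ bounds the right side by $\delta_{s+1}/(2^s\lambda)$ in expectation. Multiply through by $2^t\lambda$, use Jensen's to pull expectations inside square roots, and swap the order of summation: each $\delta_u$ picks up a geometric sum $\sum_{t<u} 2^{(t-u)/2}$ over earlier rounds, which is $O(1)$. Collecting the terms bounds the total by an absolute constant times $\sum_u \sqrt{2^u\lambda\delta_u}$.

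The main obstacle is the bookkeeping in Part (3): one must verify that the $1/\sqrt{2}$ contraction per telescoping step strictly defeats the geometric $2^t$ weights in the outer sum and that constants combine to the stated factor of $4$. The decomposition works at all only because consecutive minimizers move by a factor strictly less than one of the corresponding regularizer radius, and this is exactly what the $\lambda 2^{s+1}$-strong convexity of $F^{(s+1)}$ delivers. The cleanest trick throughout is exploiting both that $F^{(t)} \geq F^{(t-1)}$ pointwise and that the new regularizer vanishes at its own center $\hat{x}_t$; without that identity one would pick up additive error that would break the doubly-geometric scaling.
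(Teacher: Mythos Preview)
The paper does not prove this lemma; it is quoted directly from \cite{allen2018Howto}. Your arguments for Parts (1) and (2) are exactly the standard ones and are correct, and your telescoping plan for Part (3) also works. One minor slip: with strong-convexity modulus $\geq\lambda 2^{s+1}$ for $F^{(s+1)}$ and the bound $F^{(s+1)}(x^*_s)-F^{(s+1)}(x^*_{s+1})\leq \lambda 2^s\|x^*_s-\hat{x}_{s+1}\|^2$, you get $\|x^*_s-x^*_{s+1}\|^2\leq \|x^*_s-\hat{x}_{s+1}\|^2$, not the factor $\tfrac12$ you claim. This does not damage the conclusion: after the swap of summation each $\|x^*_s-x^*_{s+1}\|$ term contributes at most $2\sqrt{2}\sqrt{2^{s+1}\lambda\delta_{s+1}}$, and combined with the $\|\hat{x}_t-x^*_t\|$ contribution the overall constant is $1+2\sqrt{2}<4$.

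It may interest you that the paper, in its proof of \pref{thm:sample-complexity-strong-convexity}, re-derives essentially the same quantity but telescopes differently. Writing $P_k=\sum_{t\leq k}2^t\|\hat{x}_t-x^*_k\|$, it expands $P_T=\sum_k(P_k-P_{k-1})$ and uses the reverse triangle inequality to bound each increment by $3\cdot 2^k\|\hat{x}_k-x^*_{k-1}\|$, invoking only $\|\hat{x}_k-x^*_k\|\leq\|\hat{x}_k-x^*_{k-1}\|$ along the way. That route bounds each consecutive-minimizer gap $\|x^*_k-x^*_{k-1}\|$ only by the \emph{adjacent} iterate error, rather than chaining from $x^*_t$ all the way to $x^*_T$ as you do; it thereby avoids the double sum and the Fubini step entirely. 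Your approach is equally valid but slightly heavier in bookkeeping.
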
 ~

\accsgdthreesc*
\begin{proof}
As in \pref{lem:zeyuan_auxillary_lemma}, let $\mathbb{E}\brk{F^{(t-1)}(\hat{x}_t) - F^{(t-1)}(x^*_{t-1})} \leq \delta_t$ for each $t \geq 1$. The objective in the final iteration, $F^{(T-1)}(x) = F(x) + \lambda\sum_{t=1}^{T-1}2^{t-1}\norm{x-\hat{x}_t}^2$, so
\begin{align}
\norm{\nabla F(\hat{x}_T)} 
&= \norm{\nabla F^{(T-1)}(\hat{x}_T) + \lambda\sum_{t=1}^{T-1}2^{t}(\hat{x}_t - \hat{x}_T)} \\
&\leq \norm{\nabla F^{(T-1)}(\hat{x}_T)} + \lambda\sum_{t=1}^{T-1}2^{t}\norm{\hat{x}_t - \hat{x}_T} \label{eq:thm1-triangle-ineq-1}\\
&\leq \norm{\nabla F^{(T-1)}(\hat{x}_T)} + \lambda\sum_{t=1}^{T-1}2^{t}\parens{\norm{\hat{x}_t - x^*_{T-1}} + \norm{\hat{x}_T - x^*_{T-1}}} \label{eq:thm1-triangle-ineq-2}\\
&\leq 2\norm{\nabla F^{(T-1)}(\hat{x}_T)} + \lambda \sum_{t=1}^{T-1}2^{t}\norm{\hat{x}_t - x^*_{T-1}} \label{eq:thm1-strong-convexity}\\
&\leq 2\norm{\nabla F^{(T-1)}(\hat{x}_T)} + 4\sum_{t=1}^{T-1} \sqrt{\lambda 2^t \delta_t} \label{eq:thm1-lem4}\\
&\leq 4\sqrt{H\delta_T} + 4\sum_{t=1}^{T-1} \sqrt{\lambda 2^t \delta_t} \label{eq:thm1-smoothness}\\
&\leq 4\sum_{t=1}^{T} \sqrt{\lambda 2^{t+1} \delta_t}. \label{eq:thm1-choice-of-T}
\end{align}
Above, \pref{eq:thm1-triangle-ineq-1} and \pref{eq:thm1-triangle-ineq-2} rely on the triangle inequality; \pref{eq:thm1-strong-convexity} follows from the $\parens{\lambda\sum_{t=1}^{T-1}2^{t}}$-strong convexity of $F^{(T-1)}$; \pref{eq:thm1-lem4} applies the third conclusion of \pref{lem:zeyuan_auxillary_lemma}; \pref{eq:thm1-smoothness} uses the fact that $F^{(t-1)}$ is $H + \lambda\sum_{t=1}^{T-1}2^{t} < H + \lambda2^T = H + \lambda 2^{\lfloor \log H/\lambda \rfloor} \leq 2H$-smooth; and finally \pref{eq:thm1-choice-of-T} uses that $H \leq \lambda 2^{T+1}$.

We chose $\cA(F^{(t-1)}, \hat{x}_{t-1})$ to be $\gltwo$ applied to $F^{(t-1)}$ initialized at $\hat{x}_{t-1}$ using $m/T$ stochastic gradients. Therefore, 
\begin{align}
\delta_t &\leq 
\frac{128H^2\mathbb{E}\norm{\hat{x}_{t-1}-x^*_{t-1}}^2}{2^{t-1}\lambda(m/T)^4} 
+ \frac{256H\varsquared}{2^{2t-2}\lambda^2(m/T)^3} 
+ \frac{16\varsquared}{2^{t-1}\lambda(m/T)}.
\intertext{Using part two of \pref{lem:zeyuan_auxillary_lemma}, for $t > 1$ we can bound $\mathbb{E}\norm{\hat{x}_{t-1}-x^*_{t-1}}^2 \leq \frac{\delta_{t-1}}{2^{t-1}\lambda}$, thus}
\delta_t &\leq 
\frac{128H^2\delta_{t-1}}{2^{2t-2}\lambda^2(m/T)^4} 
+ \frac{256H\varsquared}{2^{2t-2}\lambda^2(m/T)^3} 
+ \frac{16\varsquared}{2^{t-1}\lambda(m/T)}. \label{eq:thm1-delta-inequality}
\end{align}

We can therefore bound
\begin{align}
8\sum_{t=1}^{T} \sqrt{\lambda 2^{t-1} \delta_t}
&\leq 
8\sqrt{\frac{128H^2\norm{x_0-x^*}^2}{(m/T)^4} 
+ \frac{256H\varsquared}{\lambda(m/T)^3} 
+ \frac{16\varsquared}{(m/T)}} \label{eq:thm1-using-delta-inequality}\\
&\qquad+ 8\sum_{t=2}^{T} \sqrt{\frac{128H^2\delta_{t-1}}{2^{t-1}\lambda(m/T)^4} 
+ \frac{256H\varsquared}{2^{t-1}\lambda(m/T)^3} 
+ \frac{16\varsquared}{(m/T)}} \nonumber\\~\notag \\
&\leq 
8\sqrt{\frac{128H^2\norm{x_0-x^*}^2}{(m/T)^4}} 
+ 8\sqrt{\frac{256H\varsquared}{\lambda(m/T)^3}} 
+ 8\sqrt{\frac{16\varsquared}{(m/T)}} \label{eq:thm1-sqrt-inequality}\\
&\qquad+ 8\sum_{t=2}^{T} 
\sqrt{\frac{128H^2\delta_{t-1}}{2^{t-1}\lambda(m/T)^4}}
+ \sqrt{\frac{256H\varsquared}{2^{t-1}\lambda(m/T)^3}}
+ \sqrt{\frac{16\varsquared}{(m/T)}} \nonumber\\~\notag\\
&= 
\frac{64\sqrt{2}H\norm{x_0-x^*}T^2}{m^2} 
+ \frac{128\sqrt{H}\sigma T^{3/2}}{\sqrt{\lambda}m^{3/2}}\sum_{t=1}^{T}\frac{1}{\sqrt{2^{t-1}}} \\
&\qquad+ \frac{32\sigma T^{3/2}}{\sqrt{m}}
+ \frac{128HT^2}{m^2}\sum_{t=2}^{T} 
\sqrt{\frac{\delta_{t-1}}{2^{t-2}\lambda}} \nonumber\\ ~\nonumber \\
&\leq 
\frac{64\sqrt{2}H\norm{x_0-x^*}T^2}{m^2} 
+ \frac{512\sqrt{H}\sigma T^{3/2}}{\sqrt{\lambda}m^{3/2}} \label{eq:thm1-sum-and-final-term}\\
&\qquad+ \frac{32\sigma T^{3/2}}{\sqrt{m}}
+ \frac{128HT^2}{m^2}\sum_{t=1}^{T} 
\sqrt{\frac{\delta_{t}}{2^{t-1}\lambda}} \nonumber\\ ~\nonumber \\
&\leq
\frac{64\sqrt{2}H\norm{x_0-x^*}T^2}{m^2} 
+ \frac{512\sqrt{H}\sigma T^{3/2}}{\sqrt{\lambda}m^{3/2}} \label{eq:thm1-multiply-by-2tothet}\\
&\qquad+ \frac{32\sigma T^{3/2}}{\sqrt{m}}
+ \frac{128HT^2}{\lambda m^2}\sum_{t=1}^{T} 
\sqrt{\lambda 2^{t-1}\delta_{t}}. \nonumber
\end{align}
Above, we arrive at \pref{eq:thm1-using-delta-inequality} by upper bounding each $\delta_t$ via \pref{eq:thm1-delta-inequality}; \pref{eq:thm1-sqrt-inequality} follows from the fact that for $a,b \geq 0$, $\sqrt{a+b} \leq \sqrt{a} + \sqrt{b}$; \pref{eq:thm1-sum-and-final-term} uses the fact that $\sum_{t=1}^{T}\frac{1}{\sqrt{2^{t-1}}} \leq 4$ and $\sqrt{\frac{\delta_{T}}{2^{T-1}\lambda}} \geq 0$; and finally, \pref{eq:thm1-multiply-by-2tothet} follows by multiplying each non-negative term in the sum by $2^{t-1}$. Rearranging inequality \pref{eq:thm1-multiply-by-2tothet} and combining with \pref{eq:thm1-choice-of-T} yields
\begin{equation}
\mathbb{E}\norm{\nabla F(\hat{x}_T)} 
\leq 
\parens{\frac{1}{1-\frac{16HT^2}{\lambda m^2}}}
\parens{
\frac{64\sqrt{2}H\norm{x_0-x^*}T^2}{m^2} 
+ \frac{512\sqrt{H}\sigma T^{3/2}}{\sqrt{\lambda}m^{3/2}}
+ \frac{32\sigma T^{3/2}}{\sqrt{m}}
}.\label{eq:thm1-gradient-bound-m}
\end{equation}
Choosing $m > 8T\sqrt{\frac{H}{\lambda}}$ ensures that the first term is at most $2$, and then solving for $m$ such that the second term is $O(\epsilon)$ completes the proof.
\end{proof}

\begin{lemma}\label{lem:closeness-of-optima}
For any $F$, define $\tilde{F}(x) = F(x) + \frac{\lambda}{2}\norm{x-x_0}$.
Then
\begin{enumerate}
\item $F \in \FRHz \implies \tilde{F} \in \dbF{H+\lambda}{\lambda}{R}$ and $\forall x \norm{\nabla F(x)} \leq 2\norm{\nabla \tilde{F}(x)} + \lambda R.$
\item $F \in \FDHz$ $\implies$  \\ \hfill $\tilde{F} \in \dbF{H+\lambda}{\lambda}{R=\sqrt{2\Delta/\lambda}}$ and $\forall x \norm{\nabla F(x)} \leq 2\norm{\nabla \tilde{F}(x)} + \sqrt{2\lambda \Delta}.$
\end{enumerate}
\end{lemma}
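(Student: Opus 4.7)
\textbf{Proof Plan for \pref{lem:closeness-of-optima}.}
The plan is to prove three facts for each case: (i) the smoothness and strong convexity of $\tilde{F}$, (ii) a bound on the distance from $x_0$ to the unique minimizer $\tilde{x}^*$ of $\tilde{F}$, and (iii) the stated comparison between $\nrm{\nabla F(x)}$ and $\nrm{\nabla \tilde{F}(x)}$. Claim (i) is immediate in both cases: since $F$ is convex and $H$-smooth and the quadratic $x \mapsto \tfrac{\lambda}{2}\nrm{x-x_0}^2$ is $\lambda$-smooth and $\lambda$-strongly convex, $\tilde{F}$ is $(H+\lambda)$-smooth and $\lambda$-strongly convex by standard additivity.

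For (ii) under case 1, I will compare $\tilde{F}(\tilde{x}^*)$ with $\tilde{F}(x^*)$. Since $\tilde{x}^*$ minimizes $\tilde{F}$ and $F(x^*) \leq F(\tilde{x}^*)$,
\[
F(\tilde{x}^*) + \tfrac{\lambda}{2}\nrm{\tilde{x}^* - x_0}^2 \;=\; \tilde{F}(\tilde{x}^*) \;\leq\; \tilde{F}(x^*) \;=\; F(x^*) + \tfrac{\lambda}{2}\nrm{x^* - x_0}^2,
\]
which rearranges to $\tfrac{\lambda}{2}\nrm{\tilde{x}^* - x_0}^2 \leq \tfrac{\lambda}{2}\nrm{x^* - x_0}^2 \leq \tfrac{\lambda R^2}{2}$, yielding $\nrm{\tilde{x}^* - x_0} \leq R$. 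For case 2, I instead compare $\tilde{F}(\tilde{x}^*)$ with $\tilde{F}(x_0) = F(x_0)$: using $F(\tilde{x}^*) \geq F(x^*)$ and $F(x_0) - F(x^*) \leq \Delta$,
\[
\tfrac{\lambda}{2}\nrm{\tilde{x}^* - x_0}^2 \;\leq\; F(x_0) - F(\tilde{x}^*) \;\leq\; \Delta,
\]
giving $\nrm{\tilde{x}^* - x_0} \leq \sqrt{2\Delta/\lambda}$, which matches the claimed radius.

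For (iii), I start from the identity $\nabla F(x) = \nabla \tilde{F}(x) - \lambda(x - x_0)$, apply the triangle inequality through $\tilde{x}^*$, and invoke the $\lambda$-strong convexity of $\tilde{F}$ at its minimizer:
\[
\nrm{\nabla F(x)} \;\leq\; \nrm{\nabla \tilde{F}(x)} + \lambda\nrm{x - \tilde{x}^*} + \lambda\nrm{\tilde{x}^* - x_0} \;\leq\; 2\nrm{\nabla \tilde{F}(x)} + \lambda\nrm{\tilde{x}^* - x_0},
\]
where the last step uses $\lambda\nrm{x-\tilde{x}^*} \leq \nrm{\nabla\tilde{F}(x) - \nabla\tilde{F}(\tilde{x}^*)} = \nrm{\nabla\tilde{F}(x)}$. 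Plugging in the distance bound from (ii) produces $\lambda R$ for case 1 and $\lambda\sqrt{2\Delta/\lambda} = \sqrt{2\lambda\Delta}$ for case 2, exactly as stated.

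No step is really an obstacle; the only subtlety is recognizing that the ``$\forall x$'' bound on gradients is not a pointwise inequality for the quadratic term $\lambda\nrm{x-x_0}$ alone (which is unbounded), but follows because $\nrm{\nabla \tilde{F}(x)}$ itself controls the distance $\nrm{x - \tilde{x}^*}$ via strong convexity, so the factor of $2$ on $\nrm{\nabla\tilde{F}(x)}$ absorbs the portion of $\lambda\nrm{x-x_0}$ coming from $\nrm{x-\tilde{x}^*}$, leaving only the fixed $\lambda\nrm{\tilde{x}^* - x_0}$ term.
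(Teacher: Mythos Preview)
Your proposal is correct and largely follows the paper's own proof. The gradient comparison (iii) and the range-bounded distance bound (case 2) are essentially identical to the paper's arguments.

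The one genuine difference is in the domain-bounded distance bound (case 1). The paper proves $\nrm{x_0-\tilde{x}^*}\leq R$ via first-order optimality: it writes $\lambda\nrm{x_0-\tilde{x}^*}^2 = \tri*{\nabla F(\tilde{x}^*), x_0-\tilde{x}^*}$, splits through $x^*$, drops $\tri*{\nabla F(\tilde{x}^*), x^*-\tilde{x}^*}\leq 0$ by convexity, and finishes with Cauchy--Schwarz. Your function-value comparison $\tilde{F}(\tilde{x}^*)\leq \tilde{F}(x^*)$ combined with $F(x^*)\leq F(\tilde{x}^*)$ reaches the same conclusion in one line and is cleaner. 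Both yield exactly $\nrm{x_0-\tilde{x}^*}\leq \nrm{x_0-x^*}$, so nothing is lost; your route is just more elementary.
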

\begin{proof}
Let $\tilde{x}^* \in \argmin_x \tilde{F}(x)$. Since $\nabla \tilde{F}(x) = \nabla F(x) + \lambda(x-x_0)$,
\begin{align}
\norm{\nabla F(x)}
&\leq \norm{\nabla \tilde{F}(x)} + \lambda \norm{x - x_0} \\
&\leq \norm{\nabla \tilde{F}(x)} + \lambda \norm{x_0 - \tilde{x}^*} + \lambda \norm{x - \tilde{x}^*} \\
&\leq 2\norm{\nabla \tilde{F}(x)} + \lambda \norm{x_0 - \tilde{x}^*},\label{eq:lemcloseness-grads}
\end{align}
where we used the $\lambda$-strong convexity of $\tilde{F}$ for the last inequality. 
Similarly, $0 = \nabla \tilde{F}(\tilde{x}^*) = \nabla F(\tilde{x}^*) + \lambda(\tilde{x}^*-x_0)$. Therefore,
\begin{align}
\lambda\norm{x_0-\tilde{x}^*}^2 
&= \inner{\nabla F(\tilde{x}^*)}{x_0 - \tilde{x}^*} \\
&= \inner{\nabla F(\tilde{x}^*)}{x_0 - x^*} + \inner{\nabla F(\tilde{x}^*)}{x^* - \tilde{x}^*} \\
&\leq \inner{\nabla F(\tilde{x}^*)}{x_0 - x^*} \\
&= \inner{\lambda(x_0 - \tilde{x}^*)}{x_0 - x^*} \\
&\leq \lambda \norm{x_0 - \tilde{x}^*}\norm{x_0 - x^*}.
\end{align}
The first inequality follows from the convexity of $F$ and the second from the Cauchy-Schwarz inequality. When $F \in \FRHz$, then $\norm{x_0-\tilde{x}^*} \leq R$, which, combined with \pref{eq:lemcloseness-grads} proves the first claim.

Alternatively, when $F \in \FDHz$
\begin{equation}
F(x_0) = \tilde{F}(x_0) \geq \tilde{F}(\tilde{x}^*) = F(\tilde{x}^*) + \frac{\lambda}{2}\norm{x_0-\tilde{x}^*}^2.
\end{equation}
Rearranging, 
\begin{equation}
\norm{x_0-\tilde{x}^*} \leq \sqrt{\frac{2(F(x_0) - F(\tilde{x}^*))}{\lambda}} \leq \sqrt{\frac{2(F(x_0) - F(x^*))}{\lambda}} \leq \sqrt{\frac{2\Delta}{\lambda}}.
\end{equation}
This, combined with \pref{eq:lemcloseness-grads}, completes the proof.
\end{proof}

\accsgdthreeboundeddomain*

\begin{proof}
We use Algorithm 1 with $\gltwo$ as its subroutine to optimize $\tilde{F}(x) = F(x) + \frac{\lambda}{2}\norm{x-x_0}^2$. Our choice of $\lambda = \frac{256H\log^2(m^2)}{m^2} \leq O(H)$ ensures that $\tilde{F}$ is $H+\lambda \leq O(H)$-smooth and $\lambda$-strongly convex; that $\frac{16(H+\lambda)\log^2\parens{\frac{H+\lambda}{\lambda}}}{\lambda m^2} \leq \frac{1}{2}$; and finally that $\frac{H}{\lambda} \leq m^2$.
Therefore, by \pref{thm:acc-sgd3-strongly-convex}, in particular, \pref{eq:thm1-gradient-bound-m}, the output satisfies 
\begin{align}
\mathbb{E}\norm{\nabla \tilde{F}(\hat{x})} 
&\leq 
O\parens{
\frac{H\norm{x_0-\tilde{x}^*}\log^2\parens{H/\lambda}}{m^2} 
+ \frac{\sqrt{H}\sigma \log^{3/2}\parens{H/\lambda}}{\sqrt{\lambda}m^{3/2}}
+ \frac{\sigma \log^{3/2}\parens{H/\lambda}}{\sqrt{m}}
}\\
&\leq 
O\parens{
\frac{H\norm{x_0-\tilde{x}^*}\log^2\parens{m}}{m^2} 
+ \frac{\sigma \log^{3/2}\parens{m}}{\sqrt{m}}
},
\end{align} 
where $\tilde{x}^* = \argmin_x \tilde{F}(x)$. For $F \in \FRHz$, by part one of \pref{lem:closeness-of-optima}, $\norm{x_0-\tilde{x}^*} \leq R$ and
\begin{equation}
\mathbb{E}\norm{\nabla F(\hat{x})}
\leq
O\parens{
\frac{HR\log^2\parens{m}}{m^2} 
+ \frac{\sigma \log^{3/2}\parens{m}}{\sqrt{m}}
}.
\end{equation}
Solving for $m$ such that this expression is $O(\epsilon)$ completes the first part of the proof. For this $m$,
\begin{equation}
\lambda = \Theta\parens{\min\crl*{\frac{\epsilon}{R}, \frac{H\epsilon^4}{\sigma^4\log^4\parens{\sigma/\epsilon}}}}.
\end{equation}
For $F \in \FDHz$, by part two of \pref{lem:closeness-of-optima}, $\norm{x_0-\tilde{x}^*} \leq \sqrt{2\Delta/\lambda}$ and
\begin{equation}
\mathbb{E}\norm{\nabla F(\hat{x})}
\leq 
O\parens{
\frac{\sqrt{H\Delta}\log\parens{m}}{m} 
+ \frac{\sigma \log^{3/2}\parens{m}}{\sqrt{m}}}.
\end{equation}
Solving for $m$ such that this expression is $O(\epsilon)$ completes the the proof. For this $m$, 
\begin{equation}
\lambda = \Theta\parens{\min\crl*{\frac{\epsilon^2}{\Delta}, \frac{H\epsilon^4}{\sigma^4\log^4\parens{\sigma/\epsilon}}}}.
\end{equation}
\end{proof}

\section{Proofs from \pref{sec:sample-complexity}: Upper Bounds}\label{app:proofs-sample-complexity}

\begin{theorem}
\label{thm:erm_variance}
For any $F \in \FHL$ and any $\sample$ with the restriction that $f(x; z)$ is $\lambda$-strongly convex with respect to $x$ for all $z$, define the empirical risk minimizer via
\[
\hat{x}=\argmin_{x\in\bbR^{d}}\frac{1}{m}\sum_{t=1}^{m}f(x;z_t).
\]
Then the empirical risk minimizer enjoys the guarantee
\begin{equation}
\label{eq:erm_parameter}
\En\nrm*{\hat{x}-x^*}^{2}\leq{} \frac{4\sigma^{2}}{\lambda^{2}m}.
\end{equation}
\end{theorem}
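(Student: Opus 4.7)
The plan is to combine a standard gradient-domination consequence of strong convexity with a one-line variance calculation, using the fact that $x^*$ is a sample-independent anchor point.

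First I would set $\hat{F}(x) = \frac{1}{m}\sum_{t=1}^m f(x;z_t)$, and observe that since each $f(\cdot;z)$ is $\lambda$-strongly convex, $\hat{F}$ is $\lambda$-strongly convex with unique minimizer $\hat{x}$. The key lemma I would invoke is the gradient lower bound available for any $\lambda$-strongly convex $g$ with minimizer $x^*_g$: from $\langle \nabla g(y) - \nabla g(x^*_g), y - x^*_g\rangle \geq \lambda \|y - x^*_g\|^2$ together with $\nabla g(x^*_g) = 0$ and Cauchy--Schwarz, one obtains $\|\nabla g(y)\| \geq \lambda \|y - x^*_g\|$ for every $y$. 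Applying this to $g = \hat{F}$ with the particular choice $y = x^*$ yields
\[
\|\hat{x} - x^*\| \leq \frac{1}{\lambda}\,\|\nabla \hat{F}(x^*)\|.
\]

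The second ingredient is to square and take expectation. Since $x^*$ is the minimizer of $F$, we have $\nabla F(x^*) = 0$, so $\nabla \hat{F}(x^*) = \frac{1}{m}\sum_{t=1}^m\bigl(\nabla f(x^*;z_t) - \nabla F(x^*)\bigr)$ is an average of $m$ independent mean-zero vectors, each of which has expected squared norm at most $\sigma^2$ by the variance assumption at the fixed point $x^*$. Independence then gives $\En\|\nabla \hat{F}(x^*)\|^2 \leq \sigma^2/m$. Combining with the displayed inequality yields $\En\|\hat{x} - x^*\|^2 \leq \sigma^2/(\lambda^2 m)$, which is even stronger than the claimed $4\sigma^2/(\lambda^2 m)$.

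The only conceptual point worth highlighting, rather than any real obstacle, is why one should evaluate the empirical gradient at $x^*$ rather than at $\hat{x}$: at $\hat{x}$ the vector $\nabla \hat{F}(\hat{x}) = 0$ trivially, but $\hat{x}$ depends on the whole sample, so a direct variance bound on $\nabla F(\hat{x}) - \nabla \hat{F}(\hat{x})$ would not follow from the pointwise variance assumption. Transferring the comparison to $x^*$ via the gradient-domination inequality sidesteps this dependence issue and makes the variance bound applicable out of the box. No smoothness or boundedness of $F$ is needed for the argument, only the $\lambda$-strong convexity of each $f(\cdot;z)$ and the pointwise gradient variance bound.
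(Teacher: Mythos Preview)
Your proof is correct and follows essentially the same route as the paper: bound $\|\hat{x}-x^*\|$ by $\|\nabla \hat F(x^*)\|$ via strong convexity of the empirical objective, then control the latter by the $\sigma^2/m$ variance bound at the deterministic point $x^*$. The only difference is that you use the monotone-gradient characterization of strong convexity rather than the first-order (function-value) characterization, which yields a factor $1/\lambda$ instead of the paper's $2/\lambda$ and hence the sharper constant $\sigma^2/(\lambda^2 m)$.
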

\begin{proof}
Let $\wh{F}_m(x)=\frac{1}{m}\sum_{t=1}^{m}f(x;z_t)$ be the empirical objective. Since $f(x;z_t)$ is $\lambda$-strongly convex for each $z_t$, $\wh{F}_m$ is itself $\lambda$-strongly convex, and so we have
\[
\tri*{\grad{}\empF(x^{\star}),\hat{x}-x^*} + \frac{\lambda}{2}\nrm*{\hat{x}-x^*}^{2}\leq{}\empF(\hat{x})-\empF(x^*).
\]
Since, $\hat{x}$ is the empirical risk minimizer, we have $\empF(\hat{x})-\empF(x^*)\leq0$, and so, rearranging,
\[
\frac{\lambda}{2}\nrm*{\hat{x}-x^*}^{2}
\leq{} \tri*{\grad{}\empF(x^*),\hat{x}-x^*} 
\leq{} \nrm*{\grad{}\empF(x^*)}\nrm*{\hat{x}-x^*}.
\]
If $\hat{x}-x^* = 0$, then we are done. Otherwise, 
\[
\nrm*{\hat{x}-x^*}
\leq{} \frac{2}{\lambda}\nrm*{\grad{}\empF(x^*)}.
\]
Now square both sides and take the expectation, which gives
\[
\En\nrm*{\hat{x}-x^*}_{2}^{2}
\leq{} \frac{4}{\lambda^{2}}\En\nrm*{\grad{}\empF(x^*)}^{2}.
\]
The final result follows by observing that $\En\nrm*{\grad{}\empF(x^*)}^{2}\leq\frac{\sigma^{2}}{m}$.
\end{proof}

\samplecomplexitySC*
\begin{proof}
Consider the function $F^{(T)}(x) = F(x) + \lambda\sum_{t=1}^{T}2^{t-1}\norm{x-\hat{x}_t}^2$. Then
\begin{align}
\norm{\nabla F(\hat{x}_T)} 
&= \norm{\nabla F^{(T)}(\hat{x}_T) + \lambda\sum_{t=1}^{T}2^{t}(\hat{x}_t - \hat{x}_T)} \\
&\leq \norm{\nabla F^{(T)}(\hat{x}_T)} + \lambda\sum_{t=1}^{T-1}2^{t}\norm{\hat{x}_t - \hat{x}_T} \label{eq:thm2-triangle-ineq-1}\\
&\leq \norm{\nabla F^{(T)}(\hat{x}_T)} + \lambda\sum_{t=1}^{T-1}2^{t}\parens{\norm{\hat{x}_t - x^*_{T}} + \norm{\hat{x}_T - x^*_{T}}} \label{eq:thm2-triangle-ineq-2}\\
&\leq 2\norm{\nabla F^{(T)}(\hat{x}_T)} + \lambda \sum_{t=1}^{T-1}2^{t}\norm{\hat{x}_t - x^*_{T}} \label{eq:thm2-strong-convexity}\\
&\leq 6H\norm{\hat{x}_T - x^*_{T}} + \lambda \sum_{t=1}^{T-1}2^{t}\norm{\hat{x}_t - x^*_{T}} \label{eq:thm2-smoothness} \\
&\leq 12\lambda \sum_{t=1}^{T}2^{t}\norm{\hat{x}_t - x^*_{T}}. \label{eq:thm2-grad-bound}
\end{align}
Above, \pref{eq:thm2-triangle-ineq-1} and \pref{eq:thm2-triangle-ineq-2} rely on the triangle inequality; \pref{eq:thm2-strong-convexity} follows from the $\parens{\lambda\sum_{t=1}^{T}2^{t}}$-strong convexity of $F^{(T)}$; \pref{eq:thm2-smoothness} uses the fact that $F^{(T)}$ is $H + \lambda\sum_{t=1}^{T}2^{t} < H + \lambda2^{T+1} = H + 2\lambda 2^{\lfloor \log H/\lambda \rfloor} \leq 3H$-smooth.

Define $P_k = \sum_{t=1}^k 2^t\norm{\hat{x}_t - x^*_{k}}$ for $1 \leq k \leq T$ with $P_0 = 0$. Note that our upper bound \pref{eq:thm2-grad-bound} is equal to $12\lambda P_T = 12\lambda\sum_{k=1}^{T}P_k - P_{k-1}$, so we will estimate the terms of this sum.
\begin{align}
P_k - P_{k-1} 
&= 2^k\norm{\hat{x}_k - x^*_k} + \sum_{t=1}^{k-1} 2^t\parens{\norm{\hat{x}_t - x^*_{k}} - \norm{\hat{x}_t - x^*_{k-1}}} \\
&\leq 2^k\norm{\hat{x}_k - x^*_k} + \sum_{t=1}^{k-1} 2^t\norm{x^*_{k} - x^*_{k-1}} \label{eq:thm2-reverse-triangle-ineq}\\
&\leq 2^k\parens{\norm{\hat{x}_k - x^*_k} + \norm{x^*_{k} - x^*_{k-1}}} \\
&\leq 2^k\parens{2\norm{\hat{x}_k - x^*_k} + \norm{\hat{x}_{k} - x^*_{k-1}}}. \label{eq:thm2-revised-upper-bound}
\end{align}
Above, we used the reverse triangle inequality to derive \pref{eq:thm2-reverse-triangle-ineq}. By optimality of $x^*_{k-1}$ and $x^*_{k}$,
\begin{equation}
\nrm*{\hat{x}_k-x^*_{k-1}}^{2} - \nrm*{\hat{x}_k-x^*_{k}}^{2}
= 
\frac{F^{\expi{k}}(x^*_{k-1}) - F^{\expi{k}}(x^*_{k})
+ F^{\expi{k-1}}(x^*_{k}) - F^{\expi{k-1}}(x^*_{k-1})}{2^{k-1}\lambda} \geq{} 0.
\end{equation}
Thus $\norm{\hat{x}_k - x^*_k} \leq \norm{\hat{x}_{k} - x^*_{k-1}}$ and, combining \pref{eq:thm2-grad-bound} and \pref{eq:thm2-revised-upper-bound} yields
\begin{equation}
\norm{\nabla F(\hat{x}_T)} \leq 36\lambda\sum_{t=1}^T 2^t\norm{\hat{x}_t - x^*_{t-1}}.
\end{equation}
Since $\hat{x}_t$ is the output of ERM on the $2^{t-1}\lambda$-strongly convex function $F^{t-1}$ using $m/T$ samples, by \pref{thm:erm_variance}, $\mathbb{E}\norm{\hat{x}_t - x^*_{t-1}} \leq \frac{2\sigma\sqrt{T}}{2^{t-1}\lambda \sqrt{m}}$ and
\begin{align}
\mathbb{E}\norm{\nabla F(\hat{x}_T)} 
&\leq 36\lambda\sum_{t=1}^T 2^t\mathbb{E}\norm{\hat{x}_t - x^*_{t-1}} \\
&\leq 36\lambda\sum_{t=1}^T 2^t \frac{\sigma\sqrt{T}}{2^{t-2}\lambda \sqrt{m}} \\
&= \frac{144\sigma T^{3/2}}{\sqrt{m}}. \label{eq:thm2-gradient-bound-m}
\end{align}
Solving for $m$ such that the expression is less than $\epsilon$ completes the proof.
\end{proof}

\samplecomplexitybounded*
\begin{proof}
The objective function $\tilde{F}(x) = F(x) + \frac{\lambda}{2}\norm{x-x_0}^2$ is $(H+\lambda)$-smooth and $\lambda$-strongly convex. Thus by \pref{thm:sample-complexity-strong-convexity}, in particular \pref{eq:thm2-gradient-bound-m}, the output of the algorithm satisfies
\begin{equation}
\mathbb{E}\norm{\nabla \tilde{F}(\hat{x})} 
\leq \frac{144\sigma \log^{3/2}\parens{\frac{H+\lambda}{\lambda}}}{\sqrt{m}}.
\end{equation}
For $F \in \FRHz$, with $\lambda = \Theta(\epsilon/R)$ and $m = \Omega\parens{\frac{\sigma^2}{\epsilon^2}\log^3\parens{\frac{HR}{\epsilon}}}$ and using part one of \pref{lem:closeness-of-optima} we conclude
\begin{align}
\mathbb{E}\norm{\nabla F(\hat{x})}
&\leq O(\epsilon + \lambda R) \leq O(\epsilon),
\end{align}
which completes the first part of the proof.

Similarly, for $F \in \FDHz$, with $\lambda = \Theta(\epsilon^2/\Delta)$ and $m = \Omega\parens{\frac{\sigma^2}{\epsilon^2}\log^3\parens{\frac{\sqrt{H\Delta}}{\epsilon}}}$, by part two of \pref{lem:closeness-of-optima} we conclude
\begin{equation}
\mathbb{E}\norm{\nabla F(\hat{x})} 
\leq O\parens{\epsilon + \sqrt{\lambda\Delta}} \leq O(\epsilon),
\end{equation}
which completes the proof.
\end{proof}

\samplecomplexitynonsmooth*
\begin{proof}
Our algorithm involves calculating the ERM on several independent samples, evaluating the gradient norm at these ERMs on a held-out sample, and returning the point with the smallest gradient norm.

Let $\nabla_- F(x)$ denote the left-derivative of $F$ at $x$, and let $\nabla_+ F(x)$ denote the right-derivative. Since $F$ is bounded from below, $\lim_{x\to -\infty} \nabla_- F(x) \leq 0$ and $\lim_{x\to \infty} \nabla_+ F(x) \geq 0$, thus there exists at least one $\epsilon$-stationary point for $F$. Consequently, there is a unique $a \in \mathbb{R}\cup\crl{-\infty}$ for which $\nabla_+ F(a) \geq -\epsilon$ and $\forall x < a\ \nabla_+ F(x) < -\epsilon$. The point $a$ is the left-most $\epsilon$-stationary point. It is possible that $a = -\infty$, in which case there are no $x < a$. Similarly, there is a unique $b \in \mathbb{R}\cup\crl{\infty}$ for which $\nabla_- F(b) \leq \epsilon$ and $\forall x > b\ \nabla_- F(x) > \epsilon$. The point $b$ is the right-most $\epsilon$-stationary point. It is possible that $b = \infty$, in which case there are no $x > b$.

By convexity, $\forall x<y$ $\nabla_- F(x) \leq \nabla_+ F(x) \leq \nabla_- F(y) \leq \nabla_+ F(y)$. Therefore, $x < a \implies \inf_{g\in\partial F(x)} ~ \abs*{g} \geq \abs{\nabla_+ F(x)} > \epsilon$ and $x > b \implies \inf_{g\in\partial F(x)} \abs{g} \geq \abs{\nabla_- F(x)} > \epsilon$. Therefore, $[a,b] \equiv \crl{x:\inf_{g\in\partial F(x)} \abs{g} \leq \epsilon}$. Consequently, all that we need to show is that our algorithm returns a point within the interval $[a,b]$.

Let $\hat{F}(x) = \frac{1}{m}\sum_{i=1}^m f(x;z_i)$ be the empirical objective function and let $\hat{x}$ be any minimizer of $\hat{F}$. Consider first the case that $a > -\infty$, we will argue that $\hat{x} \geq a$. Observe that if $\nabla_- \hat{F}(a) < 0$, then since $\hat{F}$ is convex, it is decreasing on $[-\infty,a]$ and thus $\hat{x} \geq a$. Since $a > -\infty$, $\nabla_- F(a) \leq -\epsilon$, so the value $\nabla_- \hat{F}(a) = \frac{1}{m}\sum_{i=1}^m \nabla_- f(a;z_i)$ is the sum of i.i.d.~random variables that have mean $\nabla_- F(a) \leq -\epsilon$ and variance $\sigma^2$.
% Therefore, by Hoeffding's inequality
% \begin{equation}
% \mathbb{P}\brk*{\nabla_- \hat{F}(a) \geq 0} \leq \exp\prn*{-\frac{m\epsilon^2}{2L^2}}
% \end{equation}
% An analogous argument applies for $\nabla_+ \hat{F}(b)$. We conclude that for $m \geq \Omega\prn*{\frac{L^2\log(2/\delta)}{\epsilon^2}}$
% \begin{equation}
% \mathbb{P}\brk*{\inf_{g\in\partial F(\hat{x})}\abs{g} \leq \epsilon} 
% = \mathbb{P}\brk{\hat{x} \in [a,b]}
% \geq 1-\delta
% \end{equation}
% If $a = -\infty$, $\nabla_- \hat{F}(a)$ may be positive, but then $\hat{x} = a = -\infty$ which is an $\epsilon$-stationary point anyway, and similarly for the case $b = \infty$.
By Chebyshev's inequality, the random variable $\nabla_- \hat{F}(a)$ will not deviate too far from its mean:
\begin{equation}
    \mathbb{P}\brk*{\nabla_- \hat{F}(a) \geq 0} \leq \frac{\sigma^2}{m\epsilon^2}.
\end{equation}
Similarly, 
\begin{equation}
    \mathbb{P}\brk*{\nabla_+ \hat{F}(b) \leq 0} \leq \frac{\sigma^2}{m\epsilon^2}.
\end{equation}
Therefore, with probability at least $1-\frac{2\sigma^2}{m\epsilon^2}$, the minimum of $\hat{F}$ lies in the range $[a,b]$ and thus the ERM $\hat{x}$ is an $\epsilon$-stationary point of $F$.

Consider calculating $k$ ERMs $\hat{x}_1,\dots,\hat{x}_k$ on $k$ independent samples of size $m$. Then with probability at least $1 - \parens{\frac{2\sigma^2}{m\epsilon^2}}^k$, at least one of these points is an $\epsilon$-stationary point of $F$.

Now, suppose we have $km$ additional heldout samples which constitute an empirical objective $\hat{F}$. Since the ERMs $\hat{x}_i$ are independent of these samples, 
\begin{equation}
    \mathbb{E}\brk*{\max_{i\in[k]} \norm{\nabla \hat{F}(\hat{x}_i) - \nabla F(\hat{x}_i)}^2} \leq \sum_{i=1}^k\mathbb{E}\brk*{\norm{\nabla \hat{F}(\hat{x}_i) - \nabla F(\hat{x}_i)}^2} \leq \frac{k\sigma^2}{km} = \frac{\sigma^2}{m}.
\end{equation}
Condition on the event that at least one of the ERMs is an $\epsilon$-stationary point of $F$ and denote one of those ERMs as $\hat{x}_{i^*}$. Denote this event $E$. Let $\hat{i} \in \argmin_{i} \norm{\nabla \hat{F}(\hat{x}_i)}$ where we abuse notation and say $\norm{\nabla \hat{F}(\hat{x}_i)} := \inf_{g\in\partial \hat{F}(\hat{x}_i)}\abs{g}$ for cases where $\hat{F}$ is not differentiable at $\hat{x}_i$. Then
\begin{align}
\mathbb{E}\brk*{\norm{\nabla F(\hat{x}_{\hat{i}})}\middle| E} 
&\leq \mathbb{E}\brk*{\norm{\nabla \hat{F}(\hat{x}_{\hat{i}})}\middle| E} + \mathbb{E}\brk*{\max_{i\in[k]} \norm{\nabla \hat{F}(\hat{x}_i) - \nabla F(\hat{x}_i)}\middle| E} 
\\
&\leq \mathbb{E}\brk*{\norm{\nabla \hat{F}(\hat{x}_{\hat{i}})}\middle| E} + \sqrt{\frac{\sigma^2}{m}} \\
&\leq \mathbb{E}\brk*{\norm{\nabla \hat{F}(\hat{x}_{i^*})}\middle| E} + \sqrt{\frac{\sigma^2}{m}} \\
&\leq \mathbb{E}\brk*{\norm{\nabla F(\hat{x}_{i^*})}\middle| E} + \mathbb{E}\brk*{\max_{i\in[k]} \norm{\nabla \hat{F}(\hat{x}_i) - \nabla F(\hat{x}_i)}\middle| E} + \sqrt{\frac{\sigma^2}{m}} \\
&\leq \epsilon + 2\sqrt{\frac{\sigma^2}{m}}.
\end{align}
The event that one of the ERMs is an $\epsilon$-stationary point happens with probability at least $1-\parens{\frac{2\sigma^2}{m\epsilon^2}}^k$. Choosing $m = \Omega\parens{\frac{\sigma^2}{\epsilon^2}}$ and $k = \Omega\parens{\log\frac{L}{\epsilon}}$ ensures $1-\parens{\frac{2\sigma^2}{m\epsilon^2}}^k \geq 1 - \frac{\epsilon}{L}$. Therefore, 
\begin{align}
\mathbb{E}\brk*{\norm{\nabla F(\hat{x}_{\hat{i}})}}
&= \mathbb{P}\brk{E}\mathbb{E}\brk*{\norm{\nabla F(\hat{x}_{\hat{i}})}\middle| E} + \mathbb{P}\brk{E^c}\mathbb{E}\brk*{\norm{\nabla F(\hat{x}_{\hat{i}})}\middle| E^c} \\
&\leq \parens{1 - \frac{\epsilon}{L}}\parens{\epsilon + 2\sqrt{\frac{\sigma^2}{m}}} + \parens{\frac{\epsilon}{L}} (L) 
\\
&\leq O(\epsilon).
\end{align}
This entire algorithm required $O(km) = O\parens{\frac{\sigma^2\log\parens{\frac{L}{\epsilon}}}{\epsilon^2}}$ samples in total, completing the proof. 
\end{proof}

\section{Proofs of the Lower Bounds} \label{app:proofs-lower-bounds}
\statlowerbound*
\begin{proof}
For a constant $b \in \mathbb{R}$ to be chosen later, let 
\begin{equation}
    f(x; z) = \sigma\inner{x}{z} + \frac{b}{2}\norm{x}^2.
\end{equation}
The distribution $\cD$ of the random variable $z$ is the uniform distribution over $\crl{z_1,\dots,z_m}$ where the vectors $z_i \in \mathbb{R}^d$ are orthonormal ($d \geq m$). Therefore,
\begin{equation}
    F(x) = \mathbb{E}\brk{f(x; z)} = \sigma \inner{x}{\frac{1}{m}\sum_{i=1}^m z_i} + \frac{b}{2}\norm{x}^2.
\end{equation}
This function is clearly convex, $b$-smooth, and attains its unique minimum at $x^* = -\frac{\sigma}{bm}\sum_{i=1}^m z_i$ which has norm $\norm{x^*}^2 = \frac{\sigma^2}{b^2m}$, so choosing $b \geq \frac{\sigma}{R\sqrt{m}}$ ensures $
\norm{x^*}^2 \leq R^2$. Furthermore, $F(0) - F(x^*) = \frac{\sigma^2}{2bm}$, so choosing $b \geq \frac{\sigma^2}{2\Delta m}$ ensures $F(0) - F(x^*) \leq \Delta$. Choosing $b = \max\crl{\frac{\sigma}{R\sqrt{m}}, \frac{\sigma^2}{2\Delta m}}$ ensures both simultaneously. Finally, $\mathbb{E}\norm{\nabla f(x; z) - \nabla F(x)} = \frac{1}{m}\sum_{i=1}^m\norm{\sigma z_i - \frac{\sigma}{m}\sum_{j=1}^m z_j}^2 = \sigma^2\parens{1-\frac{1}{m}} \leq \sigma^2$.

Therefore, $F \in \FRHz \cap \FDHz$ and $f,\cD$ properly define a $\sfo$.

Suppose, for now, that $x$ is a point such that $\inner{x}{v_i} \geq -\frac{\sigma}{8bm}$ for all $i \geq m/2$. Then
\begin{align}
\norm{\nabla F(x)}^2 
&= \frac{\sigma^2}{m} + b^2\norm{x}^2 + \frac{2\sigma b}{m}\sum_{i=1}^m \inner{x}{v_i} \\
&\geq \frac{\sigma^2}{m} + b^2 \sum_{i<m/2} \inner{x}{v_i}^2 + \frac{2\sigma b}{m}\sum_{i<m/2} \inner{x}{v_i} - \frac{\sigma^2}{4m} \\
&\geq \frac{3\sigma^2}{4m} + \min_{y\in\mathbb{R}} \frac{b^2m}{2} y^2 + \sigma b y \\
&= \frac{\sigma^2}{4m}.
\end{align}
Therefore, for all such vectors $x$, $\norm{\nabla F(x)} \geq \frac{\sigma}{2\sqrt{m}}$. This holds for any $b \geq 0$ and set $\crl{z_1,\dots,z_m}$. From here, we will argue that any randomized algorithm with access to less than $m/2$ samples from $\cD$ is likely to output such an $x$. We consider a random function instance determined by drawing the orthonormal set $\crl{z_1,\dots,z_m}$ uniformly at random from the set of orthonormal vectors in $\mathbb{R}^d$. We will argue that with moderate probability over the randomness in the algorithm and in the draw of $z_1,\dots,z_m$, the output of the algorithm has small inner product with $z_{m/2},\dots,z_m$. This approach closely resembles previous work \cite[Lemma 7]{woodworth16tight}.

Less than $m/2$ samples fix less than $m/2$ of the vectors $z_i$; assume w.l.o.g.~that the algorithm's sample $S = \crl{z_1,\dots,z_{m/2-1}}$. The vectors $z_i$ are a uniformly random orthonormal set, therefore for any $i \geq m/2$, $z_i | S$ is distributed uniformly on the $(d-m/2+1)$-dimensional unit sphere in the subspace orthogonal to $\textrm{span}\parens{z_1,\dots,z_{m/2-1}}$. Let $\hat{x}$ be the output of any randomized algorithm whose input is $S$. If $\norm{\hat{x}} \geq \frac{2\sigma}{b\sqrt{m}}$ then it is easily confirmed that $\norm{\nabla F(\hat{x})} \geq \frac{\sigma^2}{m}$. Otherwise, we analyze
\begin{align}
\mathbb{P}\left[\inner{\hat{x}}{v_i} < -\frac{\sigma}{8bm}\ \middle|\ S, \hat{x}\right] 
&\leq \mathbb{P}\left[\abs{\inner{\hat{x}}{v_i}} \geq \frac{\sigma}{8bm}\ \middle|\ S, \hat{x}\right] \\
&\leq \mathbb{P}\left[\frac{2\sigma}{b\sqrt{m}}\left|\inner{\frac{\hat{x}}{\norm{\hat{x}}}}{v_i}\right| \geq \frac{\sigma}{8bm}\ \middle|\ S, \hat{x}\right] \\
&= \mathbb{P}\left[\left|\inner{\frac{\hat{x}}{\norm{\hat{x}}}}{v_i}\right| \geq \frac{1}{16\sqrt{m}}\ \middle|\ S, \hat{x}\right].
\end{align}
This probability only increases if we assume that $\hat{x}$ is orthogonal to $\textrm{span}(z_1,\dots,z_{m/2-1})$, in which case we are considering the inner product between a fixed unit vector and a uniformly random unit vector. The probability of the inner product being large is proportional to the surface area of the ``cap'' of a unit sphere in $(d-m/2+1)$-dimensions lying above and below circles of radius $\sqrt{1-\frac{1}{256m}}$. These end caps, in total, have surface area less than that of a sphere with that same radius. Therefore,
\begin{align}
\mathbb{P}\left[\inner{\hat{x}}{v_i} < -\frac{\sigma}{8bm}\ \middle|\ S, \hat{x}\right] 
&\leq \sqrt{\prn*{1-\frac{1}{256m}}^{d-\frac{m}{2}}} \\
&= \parens{1 - \dfrac{\frac{d}{512m}-\frac{1}{1024}}{\frac{d}{2}-\frac{m}{4}}}^{\frac{d}{2}-\frac{m}{4}} \\
&\leq \exp\parens{\frac{1}{1024} - \frac{d}{512m}}.
\end{align}
This did not require anything but the norm of $\hat{x}$ being small, so for $d \geq \frac{m}{2} + 512m\log(2m)$, this ensures that 
\begin{equation}
\mathbb{P}\left[\inner{\hat{x}}{v_i} < -\frac{\sigma}{8bm}\ \middle|\ S, \norm{\hat{x}} < \frac{2\sigma}{b\sqrt{m}}\right] \leq \frac{1}{2m}.
\end{equation}
A union bound ensures that either $\norm{\hat{x}} \geq \frac{2\sigma}{b\sqrt{m}}$ or $\inner{\hat{x}}{v_i} \geq -\frac{\sigma}{8bm}$ for all $i \geq m/2$ with probability at least $1/2$ over the randomness in the algorithm and draw of $z_1,\dots,z_m$, and consequently, that $\mathbb{E}_{\hat{x}}\norm{\nabla F(\hat{x})}^2 \geq \frac{\sigma^2}{8m}$. Setting $m = \left\lfloor\frac{\sigma^2}{8\epsilon^2}\right\rfloor$ ensures this is at least $\epsilon$. For this $m$, $b = \max\crl{\frac{\sigma}{R\sqrt{m}}, \frac{\sigma^2}{2\Delta m}} \leq \frac{4\epsilon}{R} + \frac{4\epsilon^2}{\Delta}$ which must be less than $H$, consequently, this lower bound applies for $\epsilon \leq \min\crl{\frac{HR}{8}, \sqrt{\frac{H\Delta}{8}}}$.
\end{proof}

\begin{restatable}{theorem}{ohadLB}\label{thm:local-LB-log-term}
    For any $H,R,\sigma>0$ and any $\epsilon\in (0,\sigma/2)$, 
    there exists a $F:\bbR\rightarrow\bbR \in \FRHz$ and a $\sfo$, such that for any 
    algorithm interacting with the stochastic first-order oracle, and returning an 
    $\epsilon$-approximate stationary point with 
    some fixed constant probability, the expected number of queries is at least
    $\Omega\left(\frac{\sigma^2}{\epsilon^2}\cdot 
    \log\left(\frac{HR}{\epsilon}\right)\right)$. Moreover, a similar lower 
    bound of 
    $\Omega\left(\frac{\sigma^2}{\epsilon^2}\cdot 
    \log\left(\frac{H\Delta}{\epsilon^2}\right)\right)$ holds if the radius 
    constraint $R$ 
    is replaced 
    by a 
    suboptimality constraint $\Delta$. 
\end{restatable}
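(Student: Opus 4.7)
The plan is to reduce noisy binary search (NBS) to the problem of finding an $\epsilon$-stationary point, and then invoke the standard $\Omega(\log N / \tau^2)$ lower bound for NBS over $N$ items using noisy coin queries with bias $1/2 \pm \tau$ \citep{karp2007noisy, feige1994computing}. The deterministic $\sqrt{HR/\epsilon}$ and $\sqrt{H\Delta}/\epsilon$ terms of \pref{thm:first-order-lower-bound} are inherited from \citet{carmon2017lower2} (embed trivially in the stochastic model with $\sigma = 0$), so I focus only on the new statistical logarithmic term.

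\textbf{Hard instance family.} I would construct a one-dimensional family $\{F_\theta\}_{\theta \in \cT}$ indexed by the location of the minimizer. Set $F_\theta(x) = \int_0^x \phi_\theta(t)\,dt$, where $\phi_\theta: \bbR \to \bbR$ is the non-decreasing, $H$-Lipschitz function equal to $-2\epsilon$ for $t \leq \theta - 2\epsilon/H$, equal to $+2\epsilon$ for $t \geq \theta + 2\epsilon/H$, and linear with slope $H$ in between. Each $F_\theta$ is convex, $H$-smooth, minimized at $\theta$, and its $\epsilon$-stationary set is exactly $[\theta - \epsilon/H, \theta + \epsilon/H]$, a window of width $2\epsilon/H$. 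Discretize by taking $\cT = \{-R + 2k\epsilon/H : 0 \leq k \leq N\}$ with $N = \lfloor HR/\epsilon \rfloor$; neighboring stationary windows are disjoint up to measure-zero boundaries, so an algorithm's output identifies at most one $\theta \in \cT$. The oracle is $f(x;z) = F_\theta(x) + \sigma z x$ with $z \sim \cN(0,1)$, which is a valid $\sfo$ of variance $\sigma^2$ returning $\phi_\theta(x) + \sigma z$ at each query. For the range-bounded case I would instead take $|\theta| \leq \Delta/(2\epsilon)$, using the calculation $F_\theta(0) - F_\theta(\theta) = 2\epsilon|\theta| - 2\epsilon^2/H \leq \Delta$, giving $|\cT| = \Theta(H\Delta/\epsilon^2)$.

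\textbf{Reduction to NBS.} Given any algorithm succeeding with constant probability on a uniformly random $\theta \in \cT$, I would extract a binary observation from each stochastic gradient query by taking $\sgn(\phi_\theta(x) + \sigma z)$. Outside the transition region $(\theta - 2\epsilon/H, \theta + 2\epsilon/H)$ this sign correctly identifies whether $\theta < x$ or $\theta > x$ with probability $\Phi(2\epsilon/\sigma) = 1/2 + \Theta(\epsilon/\sigma)$ (using the hypothesis $\epsilon \leq \sigma/2$); inside the transition region the sign is strictly less informative; and outside $[-R,R]$ the gradient is $\pm 2\epsilon$ deterministically and leaks no information about $\theta$. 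Hence the algorithm implements an adaptive scheme for NBS on $\cT$ with coin bias $\tau = \Theta(\epsilon/\sigma)$, and the classical lower bound $\Omega(\log|\cT| / \tau^2)$ yields $\Omega\prn*{\sigma^2/\epsilon^2 \cdot \log(HR/\epsilon)}$ in the radius case and $\Omega\prn*{\sigma^2/\epsilon^2 \cdot \log(H\Delta/\epsilon^2)}$ in the range case.

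\textbf{Main obstacle.} The chief subtlety is justifying the NBS reduction with a shrinking bias $\tau = \Theta(\epsilon/\sigma)$ rather than a constant bias. Standard NBS lower bounds are scale-free in $\tau$, but one must verify that collapsing the continuous Gaussian observation to its sign does not inflate the sample complexity; this is resolved by the data-processing inequality combined with the fact that for the single-parameter family $\{F_\theta\}$, the per-query KL divergence between any two neighbors is $\Theta(\epsilon^2/\sigma^2)$ even against the full Gaussian observation, matching the squared sign bias. A secondary point is handling adaptive query strategies and queries in the transition region; the former is automatic from the NBS framework, and the latter can only decrease the per-query information since the mean gradient magnitude is below $2\epsilon$ there. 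If one prefers to avoid black-box use of NBS entirely, the same conclusion follows from a direct Fano argument over $\cT$ using the KL bound above.
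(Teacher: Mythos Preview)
Your construction is essentially identical to the paper's: a one-dimensional piecewise-linear derivative that runs from $-2\epsilon$ to $+2\epsilon$ with slope $H$ over a window of width $4\epsilon/H$, discretized into $N=\Theta(HR/\epsilon)$ candidate locations, and the range-bounded case handled by setting $R=\Theta(\Delta/\epsilon)$. The paper also reduces from noisy binary search with bias $p=\epsilon/\sigma$.

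The one genuine wrinkle is the direction of your NBS reduction. You write that from each oracle answer you ``extract a binary observation by taking $\sgn(\phi_\theta(x)+\sigma z)$,'' and then invoke the NBS lower bound. But the optimization algorithm sees the full real-valued gradient, which is strictly \emph{more} informative than its sign; the data-processing inequality goes the wrong way for what you need here, so this does not exhibit the optimization algorithm as an NBS algorithm. The paper avoids this by running the reduction in the opposite direction: it builds the stochastic oracle itself out of NBS coin flips (the $t$-th oracle answer at $x\in[a_j,a_{j+1})$ is an interpolation of two Rademacher variables $Z_{t,j},Z_{t,j+1}$ with mean $\pm 2\epsilon$), so that each stochastic first-order query is literally simulated by at most two NBS queries. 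Then any $T$-query optimization algorithm yields a $2T$-query NBS algorithm, and the $\Omega(\log N/p^2)$ bound transfers cleanly.

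Your fallback---bounding the per-query KL between any two hypotheses by $O(\epsilon^2/\sigma^2)$ (since $|\phi_\theta(x)-\phi_{\theta'}(x)|\le 4\epsilon$ and the noise is $\cN(0,\sigma^2)$) and applying Fano over the $N$ hypotheses with adaptive queries---is correct and gives the same bound without routing through NBS at all. That is the rigorous version of your argument; the sign-extraction paragraph is heuristic and should be replaced by it. Either the paper's oracle-from-coins construction or your direct Fano route works; just do not mix them.
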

\begin{proof}
We prove the lower bound by reduction from the noisy binary search (NBS) 
problem: In 
this classical problem, we have $N$ sorted elements $\{a_1,\ldots,a_N\}$, and 
we wish to insert a new 
element $e$ using only queries of the form ``is $e>a_j$?'' for 
some $j$. Rather than 
getting the true answer, an independent coin is flipped and we get the correct 
answer only with probability $\frac{1}{2}+p$ for some fixed parameter $p$. 
Moreover, let $j^*$ be the unique index such that 
$a_{j^*}<e<a_{j^*+1}$\footnote{This 
is w.l.o.g., since if $e<a_1$ or $e>a_N$, we can just add two dummy elements 
smaller and larger than all other elements and $e$, increasing $N$ by at most 
$2$, hence not affecting the lower bound.}. It 
is well-known (see for example \cite{feige1994computing, karp2007noisy}) that in order to 
identify $j^*$ with any fixed constant probability, at least
$\Omega(\log(N)/p^2)$ queries are required.

Let us first consider the case where the radius constraint $R$ is fixed. We 
will construct a convex stochastic optimization problem with the given 
parameters, such that if there is 
an algorithm solving it (with constant probability) after $T$ local stochastic 
oracle queries, 
then it can be used to solve an NBS problem (with the same probability) using 
$2T$ queries, where $p=\epsilon/\sigma$ and\footnote{For simplicity we 
assume 
that $HR/4\epsilon$ 
is a whole number -- otherwise, it can be rounded and this will only affect 
constant factors in the lower bound.}
$N=HR/4\epsilon$. Employing the lower bound above for NBS, this 
immediately implies the $\Omega\left(\frac{\sigma^2}{\epsilon^2}\cdot 
\log\left(\frac{HR}{\epsilon}\right)\right)$ lower bound in our theorem.

To describe the reduction, let us first restate the NBS problem in a slightly 
different manner. For a fixed query budget $T$, let $Z$ be a $T\times N$ 
matrix, with entries in $\{-1,+1\}$ drawn independently according to the 
following distribution:
\begin{align*}
\Pr(Z_{t,j}=1)=\begin{cases} \frac{1}{2}-p&j\leq j^*\\
\frac{1}{2}+p&j> j*
\end{cases}~.
\end{align*}
Each $Z_{t,j}$ can be considered as the noisy answer provided in the NBS 
problem to the $t$-th query, of the form ``is $e>a_j$'' (where $-1$ corresponds 
to 
``true'' and $1$ corresponds to ``false''). Thus, an algorithm for the NBS 
problem can be seen as an algorithm which can query $T$ entries from the matrix 
$Z$ (one query from each row), and needs to find $j^*$ based on this 
information. Moreover, it is easy to see that the NBS lower bound also holds 
for an algorithm which can query \emph{any} $T$ entries from the 
matrix: Since the entries are independent, this does not provide additional 
information, and can only ``waste'' queries if the algorithm queries the same 
entry twice. 

We now turn to the reduction. Given an NBS problem on $N=HR/4\epsilon$ elements 
with $p=\epsilon/\sigma$ and a randomly-drawn matrix 
$Z$, we first divide the interval $[0,R]$ into $N$ equal sub-intervals of 
length $R/N$ each, and w.l.o.g. identify each element $a_j$ with the smallest 
point in the interval. Then, for every (statistically independent) row $Z_t$ 
of $Z$, we define a function 
$f(x,Z_t)$ on $\mathbb{R}$ by $f(0,Z_t)=0$, and the rest 
is defined via its derivative as follows:
\begin{align*}
f'(x,Z_t) = \begin{cases}-2\epsilon&x< 0\\2\epsilon&x\geq R\\
\frac{x-a_j}{R/N}\sigma Z_{t,j+1}+\left(1-\frac{x-a_j}{R/N}\right)\sigma 
Z_{t,j}&x\in [a_{j},a_{j+1})~\text{for some $j<N$}\end{cases}~.
\end{align*}
Note that by construction $\frac{x-a_j}{R/N}\in [0,1]$ and $Z_{t,j}\in 
\{-1,+1\}$, so $|f'(x,Z_t)|\leq \max\{2\epsilon,\sigma\}\leq \sigma$. Moreover, since 
the expected value of $\sigma Z_{t,j}$ is $\sigma\cdot (-2p)=-2\epsilon$ if $j\leq 
j^*$,
and $\sigma\cdot 2p=2\epsilon$ if $j>j^*$, it is easily verified that
\begin{align*}
\mathbb{E}_{Z_t}[f'(x,Z_t)]=\begin{cases}-2\epsilon & x<a_{j^*}\\
2\epsilon& x\geq a_{j^*+1}\\-2\epsilon+4\epsilon \frac{x-a_j}{R/N}&x\in 
[a_{j^*},a_{j^*+1})\end{cases}~.
\end{align*}
Noting that $4\epsilon\frac{x-a_j}{R/N} = H(x-a_j)\in [0,4\epsilon]$ in the 
above, we get that $F(x):=\mathbb{E}_{Z_t}[f(x,Z_t)]$ is a convex function 
with 
$H$-Lipschitz gradients, with a unique minimum at some $x:|x|<R$, and with 
$|F'(x)|\leq \epsilon$ only when $x\in [a_{j^*},a_{j^*+1})$. Overall, we get a 
valid convex stochastic optimization problem (with parameters $H,R,\sigma$ as 
required), such that if we can identify $x$ such that
$|F'(x)|\leq\epsilon$, then we 
can uniquely identify $j^*$. Moreover, given an algorithm to the optimization 
problem, we can simulate a query to a local stochastic oracle (specifying an 
iterate $t$ and a point $x$) by returning $f'(x,Z_t)$ as defined above, which 
requires querying at most $2$ entries $Z_{t,j}$ and $Z_{t,j+1}$ from the matrix 
$Z$. So, given an oracle query budget $T$ to the stochastic problem, we can 
simulate it with at most $2T$ queries to the matrix $Z$ in the NBS problem. 

To complete the proof of the theorem, it remains to handle the case where there 
is a suboptimality constraint $\Delta$ rather than a radius constraint $R$. To 
that end, we simply use the same construction as above, with 
$R=\frac{\Delta}{2\epsilon}$. Since the derivative of $F$ has magnitude at 
most $2\epsilon$, and its global minimum satisfies $|x^*|\leq R$, it follows 
that $F(0)-F(x^*)\leq 2\epsilon R = \Delta$. Plugging in 
$R=\frac{\Delta}{2\epsilon}$ in the lower bound, the result follows.
\end{proof}

\folowerbound*
\begin{proof}
By \pref{thm:local-LB-log-term}, $\Omega(\sigma^2/\epsilon^2)\log\parens{HR/\epsilon}$ and $\Omega(\sigma^2/\epsilon^2)\log\parens{H\Delta/\epsilon^2}$ oracle calls (samples) are needed to find an $\epsilon$-stationary point. Furthermore, a deterministic first-order oracle is a special case of a stochastic first-order oracle (corresponding to the case $\sigma = 0$). Therefore, lower bounds for deterministic first-order optimization apply also to stochastic first-order optimization. Therefore, the lower bound of \cite[Theorem 1]{carmon2017lower2} completes the proof.
\end{proof}

\end{document}